\documentclass{article}

\usepackage{times}
\usepackage{graphicx} \usepackage{subfigure} 

\usepackage{natbib}

\usepackage{algorithm}
\usepackage{algorithmic}

\usepackage{amssymb,amsmath,dsfont,multirow}
\usepackage{amsthm}
\usepackage{thmtools}
\usepackage{tikz}

\usepackage{hyperref}

\usepackage[accepted]{icml2017}

\icmltitlerunning{Neural networks and rational functions}

\numberwithin{equation}{section}
\declaretheorem[numberlike=equation]{theorem}
\declaretheorem[numberlike=theorem]{lemma}
\declaretheorem[numberlike=theorem]{proposition}
\declaretheorem[numberlike=theorem]{corollary}
\declaretheoremstyle[qed={\ensuremath\Diamond}]{remstyle}

\usepackage{cleveref}
\usepackage{commath}
\def\R{\mathbb R}
\def\1{\mathds 1}
\def\cO{\mathcal O}
\def\eps{\epsilon}
\def\srelu{\sigma_{\textup{r}}}

\def\poly{\textup{poly}}
\def\polylog{\textup{poly\,log}}

\begin{document} 

\twocolumn[
\icmltitle{Neural networks and rational functions}

\icmlsetsymbol{equal}{*}

\begin{icmlauthorlist}
  \icmlauthor{Matus Telgarsky}{uiuc}
\end{icmlauthorlist}

\icmlaffiliation{uiuc}{University of Illinois, Urbana-Champaign; work completed while visiting the Simons Institute}

\icmlcorrespondingauthor{your friend}{mjt@illinois.edu}

\icmlkeywords{boring formatting information, machine learning, ICML}

\vskip 0.3in
]

\printAffiliationsAndNotice{}  
\begin{abstract}
  Neural networks
  and rational functions efficiently approximate each other.
  In more detail, it is shown here
  that for any ReLU network, there exists a rational
  function of degree $\cO(\polylog(1/\eps))$ which is $\eps$-close,
  and similarly for any rational function there exists a ReLU
  network of size $\cO(\polylog(1/\eps))$ which is $\eps$-close.
  By contrast, polynomials need degree $\Omega(\poly(1/\eps))$ to
  approximate even a single ReLU.
  When converting a ReLU network to a rational function as above,
  the hidden constants depend exponentially on the number of layers,
  which is shown to be tight; in other words, a compositional representation can be beneficial even for rational functions.
\end{abstract}

\section{Overview}

Significant effort has been invested in characterizing the functions
that can be efficiently approximated by neural networks.
The goal of the present work is to characterize neural networks more
finely by finding a class of functions which is not only well-approximated
by neural networks, but also well-approximates neural networks.

The function class investigated here is the class
of \emph{rational functions}: functions represented as the ratio of
two polynomials, where the denominator is a strictly positive
polynomial.  For simplicity, the neural networks are taken to
always use ReLU activation $\srelu(x) := \max\{0,x\}$; for a review
of neural networks and their terminology, the reader is directed
to \Cref{sec:notation}.  For the sake of brevity, a network with
ReLU activations is simply called a \emph{ReLU network}.

\subsection{Main results}

The main theorem here states that ReLU networks and rational functions
approximate each other well in the sense that $\eps$-approximating
  one class with the other requires a representation whose size is
  polynomial in $\ln(1\,/\,\eps)$, rather than being polynomial in $1/\eps$.

\begin{figure}[t]
  \centering
  \includegraphics[width=\columnwidth]{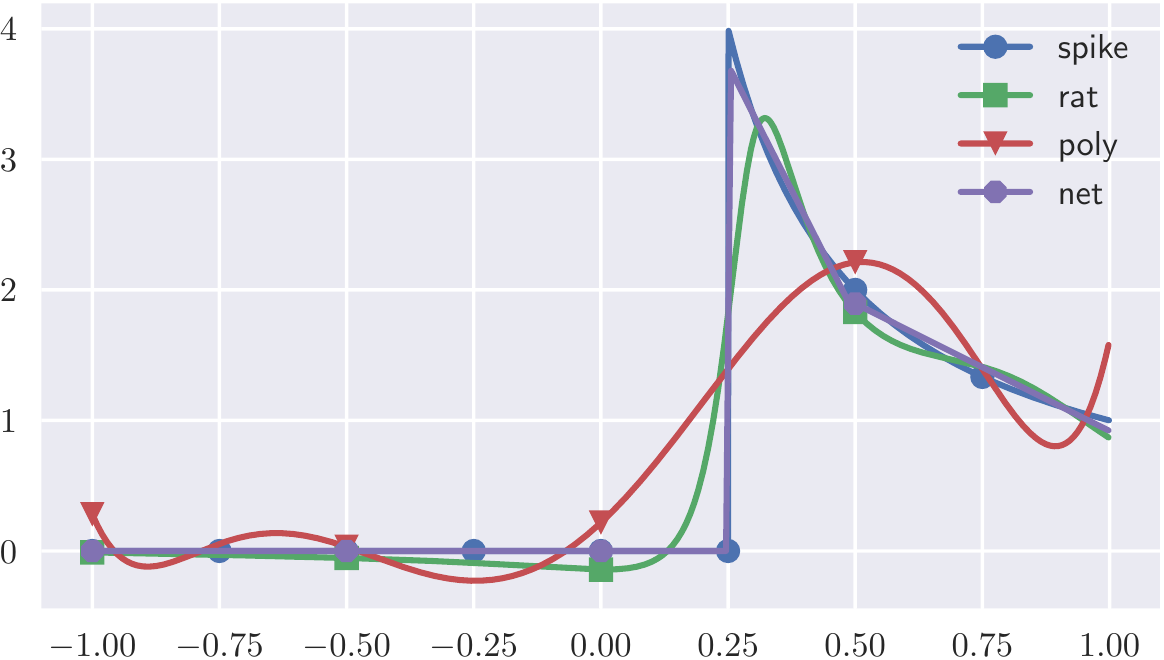}   \caption{Rational, polynomial, and ReLU network fit
    to ``spike'', a function which is $1/x$ along $[1/4,1]$ and $0$ elsewhere.}
  \label{fig:recip}
\end{figure}

\begin{theorem}
  \label{fact:main}
  \begin{enumerate}
    \item
      Let $\epsilon \in (0,1]$ and nonnegative integer $k$ be given.
      Let $p :[0,1]^d \to [-1,+1]$ and $q : [0,1]^d \to [2^{-k}, 1]$
      be polynomials of degree $\leq r$, each with $\leq s$ monomials.
      Then there exists a function $f : [0,1]^d \to \R$,
      representable as a ReLU network
                  of size (number of nodes)
      \begin{align*}
        &\cO\Big( k^7 \ln(1\,/\,\eps)^3
        \\
        &\qquad+ \min\cbr{ srk\ln(sr\,/\,\eps), sdk^2 \ln(dsr\,/\,\eps)^2 }
        \Big),
      \end{align*}
      such that
      \[
        \sup_{x\in[0,1]^d} \envert{ f(x) - \frac {p(x)}{q(x)} }
        \leq \eps.
      \]
            \item
      Let $\epsilon \in (0,1]$ be given.
      Consider a ReLU network $f: [-1,+1]^d \to \R$ with at most $m$
      nodes in each of at most $k$ layers,
      where each node computes $z\mapsto \srelu(a^\top z + b)$
      where the pair $(a,b)$ (possibly distinct across nodes) satisfies
      $\|a\|_1 + |b| \leq 1$.
      Then there exists a rational function $g : [-1,+1]^d\to \R$
      with degree (maximum degree of numerator and denominator)
      \[
        \cO \del{ \ln(k/\eps)^k m^k }
      \]
      such that
      \[
        \sup_{x\in[-1,+1]^d} \envert{ f(x) - g(x) } \leq \eps.
      \]
  \end{enumerate}
\end{theorem}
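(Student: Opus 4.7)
I would replace each ReLU in the network by a fixed rational approximation and then propagate the errors layer by layer. The main ingredient is Newman's theorem: $|x|$ on $[-1,1]$ admits a rational approximation of degree $n$ with uniform error $e^{-\Theta(\sqrt{n})}$, and since $\srelu(x) = (x+|x|)/2$ this transfers to the ReLU. Taking $n$ of order $\ln(k/\eps)^2$ produces a single-ReLU rational $R$ of degree $d = \cO(\ln(k/\eps))$-ish with error $\eps/k$ on $[-1,1]$. Under the hypothesis $\|a\|_1 + |b| \leq 1$, every affine preactivation stays in $[-1,1]$ whenever its inputs do, so $R$ is evaluated exactly where its approximation bound applies.

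The global rational $g$ is defined inductively, replacing each ReLU by $R$. The degree recurrence is $d_i \leq d \cdot m \cdot d_{i-1}$: a common denominator across $m$ rationals of degree $d_{i-1}$ yields an affine combination of degree $\leq m d_{i-1}$, and composing with the degree-$d$ rational $R$ multiplies by $d$. This gives $d_k \leq (dm)^k$, matching the claimed $\cO(\ln(k/\eps)^k m^k)$. For the error, each ReLU is $1$-Lipschitz and each layer contracts by $\|a\|_1 \leq 1$, so a layer-by-layer induction shows the pointwise error after layer $i$ is at most $i\cdot(\eps/k)$, which is $\leq \eps$ at the output.

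\textbf{Part 1 (rational function $\to$ ReLU network).}
I would build three blocks and glue them. First, the Yarotsky sawtooth construction of $x^2$, combined with $xy = \tfrac14((x+y)^2 - (x-y)^2)$, produces a ReLU multiplier on $[-1,1]^2$ of size $\cO(\ln(1/\delta))$ and accuracy $\delta$; iterating this gives polynomial evaluation in two flavors (either monomial-by-monomial, or by sharing a repeated-squaring cascade across monomials), which respectively produce the two arguments of the $\min$ in the theorem. Second, to invert $q$ on $[2^{-k},1]$ I would use the reciprocal iteration $y_{t+1} \leftarrow y_t(2 - q(x)\,y_t)$, which satisfies $1 - q\,y_{t+1} = (1 - q\,y_t)^2$ and hence doubles the number of correct bits each step, so only $\cO(k + \ln\ln(1/\eps))$ iterations are needed, each built from two copies of the multiplier block. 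Third, the final product $p(x)\cdot(1/q(x))$ is assembled and its error bounded by the triangle inequality.

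\textbf{Main obstacle.}
The main technical difficulty is the reciprocal block in Part~1. Newton iteration is delicate when the divisor can be as small as $2^{-k}$: the iterate $y_t$ grows up to $2^k$, so any internal error is amplified by a factor of $q^{-1} \leq 2^k$ and must be held well below $\eps \cdot 2^{-k}$ (and, to preserve quadratic convergence, below $\delta_t^2$ at step $t$). Propagating this accuracy requirement through the multiplier blocks, whose size grows like $\ln(1/\delta)$, is what inflates the first summand to the stated $\cO(k^7 \ln(1/\eps)^3)$. For Part~2, the only subtle point is that degree must be tracked via a common denominator at every layer rather than by naive substitution; otherwise the bound explodes as $(dm)^{2^k}$ instead of $(dm)^k$.
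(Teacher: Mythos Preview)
Your Part~2 is essentially the paper's proof: replace each $\srelu$ by a Newman-based rational $R$ of degree $\cO(\ln(k/\eps)^2)$, propagate the error layer by layer using the $1$-Lipschitz property of $\srelu$ together with $\|a\|_1+|b|\leq 1$ (this is \Cref{fact:rational_net}), then collapse the resulting rational network by taking common denominators at each layer, yielding exactly your recurrence $d_i\leq d\cdot m\cdot d_{i-1}$ (this is \Cref{fact:rat_net_collapse}).

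For Part~1, your polynomial building blocks --- Yarotsky squaring, polarization for multiplication, and the two monomial-evaluation strategies that produce the two arms of the $\min$ --- again match the paper (\Cref{fact:relu_square,fact:relu_mul,fact:relu_exp,fact:relu_polynomial}). The genuine difference is the reciprocal block. The paper does \emph{not} use Newton iteration. It truncates the geometric series $1/x = c\sum_{i\geq 0}(1-cx)^i$ at a fixed scale $c$ (\Cref{fact:relu_invert_nounity}), which is efficient only when $cx$ is bounded away from $0$, and then stitches together $k$ such approximants at dyadic scales $c\in\{2^0,\ldots,2^{k}\}$ via a ReLU ``switch statement'', i.e.\ a partition of unity (\Cref{fact:relu_part_unity}). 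The per-scale blocks invoke the exponentiation lemma to form the powers $(1-cx)^i$, and this is the source of the $k^7\ln(1/\eps)^3$ term.

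Your Newton route is a legitimate alternative and in fact closely related: since $1-xy_{t+1}=(1-xy_t)^2$, starting from $y_0=1$ gives $xy_t=1-(1-x)^{2^t}$, so Newton is computing exactly the $2^t$-term partial sum of the \emph{same} geometric series, but via repeated squaring rather than term by term --- which removes the need for the partition-of-unity switch entirely. Carried through carefully (clip $y_t\in[0,2^{k+1}]$; take multiplier accuracy $\delta=\Theta(2^{-2k}\eps)$ so the per-step residual perturbation $\cO(2^k\delta)$ stays below $2^{-k}\eps$; run $\cO(k+\ln\ln(1/\eps))$ iterations), the reciprocal block costs only $\cO\bigl(k(k+\ln(1/\eps))\bigr)$ nodes --- strictly smaller than the paper's bound, so the theorem as stated still follows. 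Your claim that the Newton bookkeeping ``inflates to $k^7\ln(1/\eps)^3$'' is not supported by the Newton analysis; it reads as pattern-matching to the target rather than a computation. What the paper's route buys in exchange for the looser bound is the switch-statement lemma as a standalone primitive, which the paper highlights as independently interesting.
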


Perhaps the main wrinkle is the appearance of
$m^k$ when approximating neural networks by rational functions.
The following theorem shows that this dependence is tight.

\begin{theorem}
  \label{fact:descartes}
  Let any integer $k \geq 3$ be given.
  There exists a function $f:\R\to\R$ computed by a
  ReLU network with $2k$ layers, each with $\leq 2$ nodes,
  such that any rational function $g:\R\to\R$ with $\leq 2^{k-2}$
  total terms in the numerator and denominator must satisfy
  \[
    \int_{[0,1]} |f(x) - g(x)|\dif x \geq \frac 1 {64}.
  \]
\end{theorem}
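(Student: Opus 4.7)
The plan is to let $f$ be the $k$-fold composition $\Delta^k$ of the tent map $\Delta(x) := 2\srelu(x) - 4\srelu(x - 1/2)$. A single $\Delta$ is computed by a hidden layer of two ReLU units followed by an affine readout, and cascading $k$ such blocks (feeding each readout into the next block's preactivations) fits comfortably inside the stated $2k$ layers at width $2$. A straightforward induction on $k$ shows that $\Delta^k|_{[0,1]}$ is a sawtooth consisting of $2^{k-1}$ congruent triangular teeth, each supported on an interval of length $2^{1-k}$ and rising linearly from $0$ up to $1$ and back down to $0$.

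The heart of the proof---and the reason for the name of the theorem---is a sparse-polynomial root count. Writing $g = p/q$, a crossing of the horizontal line $y = 1/2$ by $g$ coincides (where $q \neq 0$) with a sign change of the polynomial $h(x) := 2p(x) - q(x)$. Since $h$ has at most as many nonzero monomials as $p$ and $q$ together, it has at most $2^{k-2}$ terms, so Descartes' rule of signs limits $h$ to at most $2^{k-2} - 1$ positive real roots. Hence $g$ transitions from above $1/2$ to below (or vice versa) at most $2^{k-2} - 1$ times on $(0,1)$, and consequently on at least $2^{k-1} - (2^{k-2} - 1) > 2^{k-2}$ of the teeth of $f$, the function $g$ is either $\geq 1/2$ throughout the tooth or $\leq 1/2$ throughout the tooth.

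On such a ``good'' tooth with $g \geq 1/2$, the sublevel set $\{f \leq 1/4\}$ consists of two subintervals at the endpoints of the tooth with total length $2^{-k-1}$, on which $|f - g| \geq 1/4$; the case $g \leq 1/2$ is symmetric using $\{f \geq 3/4\}$. Summing $2^{-k-1} \cdot 1/4 = 2^{-k-3}$ over at least $2^{k-2}$ good teeth yields $\int_{[0,1]} |f - g| \geq 2^{-5} = 1/32$, comfortably above the claimed $1/64$. The main obstacle is the sparsity-based root bound itself: a bound on the degree of $g$ alone would not suffice, since a modest-degree rational function can oscillate many times, and it is precisely the few-term assumption that activates Descartes' rule. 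Once this is in hand, the width-$2$ ReLU realization of $\Delta^k$, the monomial bookkeeping for $2p - q$, and the intra-tooth measure computation are all routine (poles of $g$ in $[0,1]$ can be dismissed since they force the integral to be infinite).
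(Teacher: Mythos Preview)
Your proposal is correct and follows essentially the same route as the paper: the hard function is $\Delta^k$, the crossing count comes from Descartes' rule applied to $2p-q$, and the $L_1$ gap is harvested tooth-by-tooth. The only cosmetic differences are that the paper invokes its standing convention $q>0$ (so your remark about poles is unnecessary there) and outsources the final per-tooth estimate to \citep[Proof of Theorem 1.1]{mjt_nn}, obtaining $1/64$, whereas your direct computation on the sets $\{f\le 1/4\}$ and $\{f\ge 3/4\}$ yields the slightly sharper $1/32$.
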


Note that this statement implies the desired difficulty of approximation,
since a gap in the above integral ($L_1$) distance implies a gap
in the earlier uniform distance ($L_\infty$), and furthermore
an $r$-degree rational function necessarily has $\leq 2r+2$ total
terms in its numerator and denominator.

As a final piece of the story, note that the conversion between
rational functions and ReLU networks is more seamless if instead
one converts to \emph{rational networks}, meaning neural networks
where each activation function is a rational function.

\begin{lemma}
  \label{fact:rational_net}
  Let a ReLU network $f :[-1,+1]^d\to\R$ be given as in \Cref{fact:main},
  meaning $f$ has at most $l$ layers
  and each node computes $z\mapsto \srelu(a^\top z + b)$ where
    where the pair $(a,b)$ (possibly distinct across nodes) satisfies
  $\|a\|_1+|b| \leq 1$.
  Then there exists a rational function $R$ 
  of degree $\cO(\ln(l/\eps)^2)$
  so that replacing each $\srelu$ in $f$ with $R$ yields
  a function $g : [-1,+1]^d\to\R$ with
  \[
    \sup_{x\in[-1,+1]^d} |f(x) - g(x)| \leq \eps.
  \]
\end{lemma}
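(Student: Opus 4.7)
The plan is to build a single rational function $R$ that uniformly approximates $\srelu$ well on a slightly enlarged interval, substitute $R$ for every activation in $f$, and then bound the overall error by a layer-by-layer Lipschitz propagation argument that exploits the hypothesis $\|a\|_1+|b|\leq 1$.

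For the construction of $R$, I would invoke Newman's classical theorem on rational approximation of $|x|$: there is a rational function of degree $n$ approximating $|x|$ uniformly on $[-1,+1]$ to error $\cO(\exp(-\sqrt{n}))$. Combining this with $\srelu(x) = (x+|x|)/2$ and rescaling the argument produces a rational $R$ of degree $\cO(\ln(1/\delta)^2)$ satisfying $\sup_{x\in[-2,+2]}|R(x)-\srelu(x)|\leq \delta$. Setting $\delta := \eps/l$ then yields the advertised degree $\cO(\ln(l/\eps)^2)$. Enlarging the approximation domain from $[-1,+1]$ to $[-2,+2]$ costs only a constant factor in the degree but creates room to absorb the pre-activation perturbations introduced by earlier rational layers.

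For the propagation, fix $x\in[-1,+1]^d$ and let $z^{(i)}$ and $\tilde z^{(i)}$ denote the vectors of node outputs at layer $i$ in $f$ and in the substituted network, respectively. Using $\|a\|_1+|b|\leq 1$ and $\srelu([-1,+1])\subseteq [0,1]$, a quick induction gives $\|z^{(i)}\|_\infty \leq 1$, so every true pre-activation lies in $[-1,+1]$. At a given node in layer $i$, set $y := a^\top z^{(i-1)}+b$ and $\tilde y := a^\top \tilde z^{(i-1)}+b$, and split
\[
|R(\tilde y)-\srelu(y)| \leq |R(\tilde y)-\srelu(\tilde y)| + |\srelu(\tilde y)-\srelu(y)|.
\]
Since $\srelu$ is $1$-Lipschitz and $\|a\|_1\leq 1$, the second term is at most $\|\tilde z^{(i-1)}-z^{(i-1)}\|_\infty$, and the first is at most $\delta$ as long as $\tilde y\in[-2,+2]$. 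This gives the recurrence $\|\tilde z^{(i)}-z^{(i)}\|_\infty \leq \|\tilde z^{(i-1)}-z^{(i-1)}\|_\infty + \delta$, hence $\|\tilde z^{(l)}-z^{(l)}\|_\infty \leq l\delta = \eps$, which is the desired sup-norm bound on $|f-g|$.

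The main bookkeeping step is verifying that $\tilde y$ never leaves $[-2,+2]$, which holds whenever the running error stays below $1$; the choice $l\delta=\eps\leq 1$ makes this automatic. I expect the only genuinely nontrivial ingredient to be the Newman-style construction of $R$ together with its correct rescaling to the domain $[-2,+2]$; everything else is a triangle-inequality induction that rides on the $1$-Lipschitz contraction forced by $\|a\|_1+|b|\leq 1$.
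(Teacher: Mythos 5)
Your proposal is correct and follows the same overall strategy as the paper: Newman's rational approximation of $|x|$ combined with $\srelu(x)=(x+|x|)/2$ at accuracy $\eps/l$ per activation, followed by a triangle-inequality induction over layers that uses the $1$-Lipschitzness of $\srelu$ and $\|a\|_1\leq 1$ to propagate error additively. The one place you diverge is the bookkeeping that keeps the substituted pre-activations inside the region where $R$ approximates $\srelu$: the paper post-composes the Newman approximant with a small affine correction ($R_{r,b}:=(1-2\eps_{r,b})\tilde R_{r,b}+b\eps_{r,b}$) to force $R\in[0,1]$ on $[-1,+1]$, so the substituted network's node outputs stay in $[0,1]$ and the boundedness induction decouples entirely from the error induction; you instead enlarge the approximation domain to $[-2,+2]$ and run a joint induction in which the accumulated error bound $(i-1)\delta\leq\eps\leq 1$ itself certifies $\tilde y\in[-2,+2]$ at the next layer. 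Both devices are valid and cost only constants in the degree; the paper's range-clipping makes the two inductions independent (and is reused later when collapsing the network into a single rational function), while your domain-enlargement avoids modifying $R$ at the price of having to check that the induction is not circular — which you correctly note it is not, since the layer-$(i-1)$ error bound is all that is needed to place the layer-$i$ pre-activations in $[-2,+2]$.
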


Combining \Cref{fact:descartes} and \Cref{fact:rational_net}
yields an intriguing corollary.

\begin{corollary}
  \label{fact:rat_sep}
  For every $k\geq 3$,
  there exists a function $f:\R\to\R$ computed by a rational
  network with $\cO(k)$ layers and $\cO(k)$ total nodes,
  each node invoking a rational activation of degree $\cO(k)$,
  such that every rational function $g:\R\to\R$
  with less than $2^{k-2}$ total terms in the numerator
  and denominator satisfies
  \[
    \int_{[0,1]} |f(x) - g(x)|\dif x \geq \frac 1 {128}.
  \]
\end{corollary}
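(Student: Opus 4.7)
The strategy is to combine \Cref{fact:descartes} with \Cref{fact:rational_net} via a triangle inequality: rationalize the ``hard-to-approximate'' ReLU network from \Cref{fact:descartes} using \Cref{fact:rational_net}, and argue that any short rational approximation of the resulting rational network would also well-approximate the original ReLU network, contradicting \Cref{fact:descartes}.

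In detail, fix $k \geq 3$ and let $f:\R\to\R$ be the ReLU network promised by \Cref{fact:descartes}, which has $2k$ layers and $\leq 2$ nodes per layer and satisfies $\int_{[0,1]}|f-h|\dif x \geq 1/64$ for every rational $h$ with fewer than $2^{k-2}$ total terms. Apply \Cref{fact:rational_net} to $f$ with target accuracy $\eps_0 := 1/128$: this yields a rational activation $R$ of degree $\cO(\ln(2k/\eps_0)^2) = \cO((\ln k)^2) = \cO(k)$, and the rational network $\tilde f$ obtained by substituting $R$ for each $\srelu$ in $f$ satisfies $\sup_{x\in[-1,1]} |f(x) - \tilde f(x)| \leq 1/128$. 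By construction $\tilde f$ has $\cO(k)$ layers, $\cO(k)$ total nodes, and rational activation of degree $\cO(k)$, matching the structural parameters claimed in \Cref{fact:rat_sep}.

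Now suppose, for contradiction, that some rational $g:\R\to\R$ with fewer than $2^{k-2}$ total terms in its numerator and denominator satisfies $\int_{[0,1]}|\tilde f - g|\dif x < 1/128$. Since $[0,1]\subset [-1,1]$, the sup bound above gives $\int_{[0,1]}|f - \tilde f|\dif x \leq 1/128$, so the triangle inequality yields
\[
\int_{[0,1]} |f - g|\dif x
\;\leq\; \int_{[0,1]}|f-\tilde f|\dif x + \int_{[0,1]}|\tilde f - g|\dif x
\;<\; \frac{1}{128} + \frac{1}{128} = \frac{1}{64},
\]
contradicting \Cref{fact:descartes}. Hence no such $g$ can exist, establishing the claimed $1/128$ lower bound.

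The only point requiring care is that \Cref{fact:rational_net} imposes the normalization $\|a\|_1 + |b|\leq 1$ at each node, which is not explicitly stated for the network constructed in \Cref{fact:descartes}. I expect this to be handled by a layer-by-layer rescaling that preserves the computed function on $[0,1]$ (absorbing scaling factors into later weights or the final output), and thus I view it as cosmetic bookkeeping rather than a real obstacle; apart from that, the argument is a two-line triangle inequality sandwiched between the two earlier results.
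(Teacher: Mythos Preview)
Your proposal is correct and is exactly the approach the paper intends: the paper states the corollary as an immediate consequence of combining \Cref{fact:descartes} and \Cref{fact:rational_net}, and your triangle-inequality sandwich is the natural (and only reasonable) way to effect that combination. Your flag on the $\|a\|_1+|b|\leq 1$ normalization is apt; it is indeed cosmetic, since the $\Delta^k$ network keeps all node outputs in $[0,1]$ and all ReLU inputs bounded by an absolute constant, so the induction in the proof of \Cref{fact:rational_net} goes through verbatim after invoking \Cref{fact:relu_newman} with a constant $b>1$, leaving the degree bound $\cO(\ln(k/\eps)^2)=\cO(k)$ unchanged.
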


The hard-to-approximate function $f$ is a rational network which
has a description of size $\cO(k^2)$.  Despite this, attempting
to approximate it with a rational function of the usual form requires
a description of size $\Omega(2^k)$.  Said another way:
even for rational
functions, there is a benefit to a neural network representation!

\subsection{Auxiliary results}

\begin{figure}[t]
  \centering
  \includegraphics[width=\columnwidth]{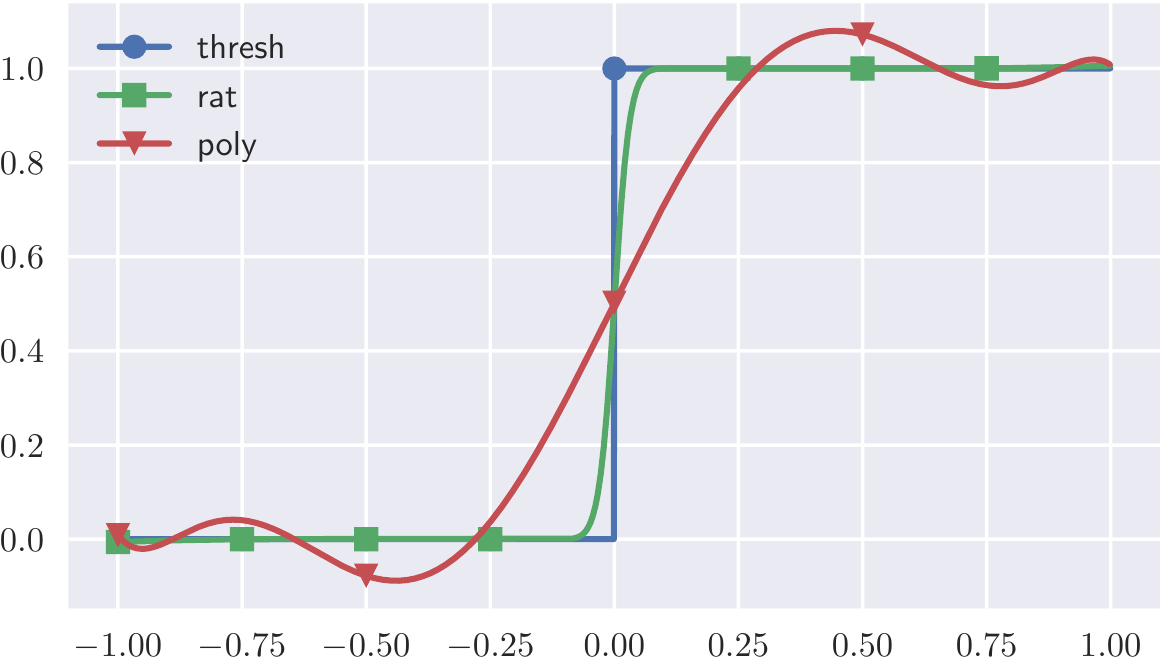}
  \caption{Polynomial and rational fit
    to the threshold function.}
  \label{fig:thresh}
\end{figure}

The first thing to stress
is that \Cref{fact:main} is impossible with polynomials:
namely, while it is true that ReLU networks can efficiently approximate
polynomials \citep{yarotsky,ohad__poly,srikant_poly},
on the other hand polynomials require
degree $\Omega(\poly(1/\eps))$, rather than $\cO(\poly(\ln(1/\eps)))$,
to approximate
a single ReLU, or equivalently the absolute value function
\citep[Chapter 4, Page 73]{petrusev_popov__rational}.

Another point of interest is the depth needed when converting
a rational function to a ReLU network.
\Cref{fact:main} is impossible if the depth is $o(\ln(1/\eps))$:
specifically, it is impossible to approximate the degree 1 rational
function $x\mapsto 1/x$ with size $\cO(\ln(1/\eps))$ but depth $o(\ln(1/\eps))$.

\begin{proposition}
  \label{fact:div:shallow}
  Set $f(x) := 1/x$, the reciprocal map.
  For any $\eps>0$ and ReLU network $g:\R\to\R$ with $l$
  layers and $m< (27648 \eps)^{-1/(2l)}/2$ nodes,
  \[
    \int_{[1/2,3/4]} |f(x) - g(x)| \dif x > \eps.
  \]
\end{proposition}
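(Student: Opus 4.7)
My plan is the classical ``pieces versus convexity'' lower bound: control the number of affine pieces of the ReLU network $g$, exploit strict convexity of $1/x$ to get a per-piece $L_1$ lower bound, then aggregate via Jensen.

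\emph{Step 1 (piece count).} First I would prove by induction on depth that any scalar ReLU network with $l$ layers and at most $m$ nodes per layer computes a continuous piecewise affine function with at most $(2m)^l$ linear pieces. The induction rests on two elementary facts: an affine combination of univariate piecewise affine functions has piece count bounded by the sum of their individual piece counts, and composing with $\srelu$ at most doubles the piece count (each existing piece is split by at most one new breakpoint, at the zero-crossing of the preactivation).

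\emph{Step 2 (per-piece lower bound).} Next I exploit that $f(x) = 1/x$ is strictly convex on $[1/2,3/4]$ with $f''(x)=2/x^3\geq 128/27$. A direct computation of the best $L_1$ affine approximation of a quadratic of curvature $c$ on an interval of length $h$ (the optimum is pinned down by the two standard sign-balance conditions, agreeing with the quadratic at the quartile points, and integrates to $ch^3/32$) yields, by comparison, that for every affine $\ell$ and every $I\subseteq[1/2,3/4]$ of length $h$,
\[
  \int_I \envert{\tfrac{1}{x} - \ell(x)} \dif x \;\geq\; \tfrac{128}{27}\cdot\tfrac{h^3}{32}.
\]

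\emph{Step 3 (aggregate).} Writing $h_1,\dots,h_N$ for the lengths of the affine pieces of $g$ restricted to $[1/2,3/4]$, with $\sum_i h_i = 1/4$ and $N \leq (2m)^l$, the power-mean inequality gives $\sum_i h_i^3 \geq 1/(64 N^2)$. Summing the per-piece bounds yields $\int_{1/2}^{3/4}\envert{f-g}\dif x \geq 1/(432\,(2m)^{2l})$, and the hypothesis $m < (27648\eps)^{-1/(2l)}/2$ forces this to exceed $\eps$ (with a factor of $64$ to spare).

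Each of the three steps is essentially routine once set up; the only real obstacle is bookkeeping the constants. The slack between the $432$ I would obtain and the stated $27648$ is absorbed in the factor-of-$64$ cushion, which accommodates either a slightly looser piece-count lemma or a slightly looser per-piece $L_1$ estimate than the sharp ones sketched above.
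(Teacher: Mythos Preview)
Your proof is correct and follows the same overall scheme as the paper --- bound the number of affine pieces of $g$, use strict convexity of $1/x$ to get a cubic-in-length $L_1$ lower bound on each piece, then aggregate. The execution differs in two places, both to your advantage. First, the paper further subdivides at the crossings of $f$ and $g$ (at most tripling the piece count to $N\leq 3(2m)^l$) so that $f-g$ has constant sign on each subinterval, and then handles the cases $f\geq g$ and $g\geq f$ separately via tangent- and secant-line comparisons; your best-$L_1$-approximation bound $\int_I |f-\ell|\geq \frac{128}{27}\cdot\frac{|I|^3}{32}$ holds for \emph{every} affine $\ell$, so it avoids both the extra subdivision and the case split. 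Second, the paper aggregates by discarding the short pieces (length below $1/(8N)$) and bounding $\delta_j^3\geq \delta_j/(8N)^2$ on the remainder, whereas your power-mean step handles all pieces at once. Both simplifications tighten the constant: you obtain $1/(432(2m)^{2l})$ against the paper's $1/(27648(2m)^{2l})$, which is exactly the factor-of-$64$ cushion you noted.
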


Lastly, the implementation of division in a ReLU network requires a
few steps, arguably the most interesting being a ``continuous
switch statement'', which computes reciprocals differently based on
the magnitude of the input.  The ability to compute switch statements
appears to be a fairly foundational operation available to neural
networks and rational functions
\citep[Theorem 5.2]{petrusev_popov__rational},
but is not available to polynomials (since otherwise they could
approximate the ReLU).

\subsection{Related work}

The results of the present work follow a long
line of work on the representation power of neural networks and
related functions.  The ability of ReLU networks to fit continuous
functions was no doubt proved many times, but it appears the earliest
reference is to Lebesgue \citep[Page 1]{newman__relu},
though of course results of this type are usually given much
more contemporary attribution \citep{cybenko}.
More recently, it has been shown that certain function classes
only admit succinct representations with many layers
\citep{mjt_easy_relu}.
This has been followed by proofs showing
the possibility for a depth 3 function to require exponentially many
nodes when rewritten with 2 layers \citep{ohad_nn_apx}.
There are also a variety of other result giving the ability of ReLU
networks to approximate various function classes
\citep{tensor_deep_repr_power,poggio__nn_repr}.

Most recently, a variety of works pointed out neural networks
can approximate polynomials, and thus smooth functions essentially
by Taylor's theorem
\citep{yarotsky,ohad__poly,srikant_poly}.
This somewhat motivates this present work, since
polynomials can not in turn approximate
neural networks with a dependence $\cO(\polylog(1/\eps))$:
they require degree $\Omega(1/\eps)$ even for a single ReLU.

Rational functions are extensively studied in the classical
approximation theory literature \citep{Lor2,petrusev_popov__rational}.
This literature draws close connections between rational functions
and \emph{splines} (piecewise polynomial functions), a connection
which has been used in the machine learning literature to
draw further connections to neural networks \citep{peter__rational}.
It is in this approximation theory
literature that one can find the following astonishing fact:
not only is it possible to approximate the absolute value function
(and thus the ReLU) over $[-1,+1]$ to accuracy $\eps>0$ with a
rational function of degree $\cO(\ln(1/\eps)^2)$ \citep{newman__relu},
but moreover the optimal rate is known
\citep{petrusev_popov__rational,zolotarev}!
These results form the basis of those
results here which show that rational functions can approximate ReLU
networks.
(Approximation theory results also provide other functions
(and types of neural networks) which rational functions can approximate
well, but the present work will stick to the ReLU for simplicity.)

An ICML reviewer revealed prior work which was embarrassingly overlooked by the author:
it has been known, \emph{since decades ago} \citep{ltf_division}, that neural networks using threshold
nonlinearities (i.e., the map $x\mapsto \1[x \geq 0]$)
can approximate division, and moreover the proof is similar to the
proof of part 1 of \Cref{fact:main}!  Moreover, other work on threshold networks
invoked Newman polynomials to prove lower bound about linear threshold
networks \citep{paturi_saks__rational}.
Together this suggests that not only the connections between rational
functions and neural networks are tight (and somewhat known/unsurprising), but also
that threshold networks and ReLU networks have perhaps more similarities than
what is suggested by the differing VC dimension bounds, approximation results,
and algorithmic results \citep{klivans_relu_alg}.

\subsection{Further notation}
\label{sec:notation}

Here is a brief description of the sorts of neural networks used
in this work.
Neural networks represent computation as a directed graph, where
nodes consume the outputs of their parents, apply a computation to them,
and pass the resulting value onward.  In the present work,
nodes take their parents' outputs $z$ and compute $\srelu(a^\top z + b)$,
where $a$ is a vector, $b$ is a scalar, and $\srelu(x) := \max\{0,x\}$;
another popular choice of nonlineary is the \emph{sigmoid}
$x\mapsto (1+\exp(-x))^{-1}$.
The graphs in the present work are acyclic and connected with a
single node lacking children designated as the univariate output,
but the literature contains many variations on all of these choices.

As stated previously, a rational function $f:\R^d\to\R$
is ratio of two polynomials.  Following conventions
in the approximation theory literature \citep{Lor2}, the denominator
polynomial will always be strictly positive.  The degree of a rational
function is the maximum of the degrees of its numerator and denominator.

\section{Approximating ReLU networks with rational functions}
\label{sec:rational_apx_relu}

\begin{figure}[t]
  \centering
  \includegraphics[width=\columnwidth]{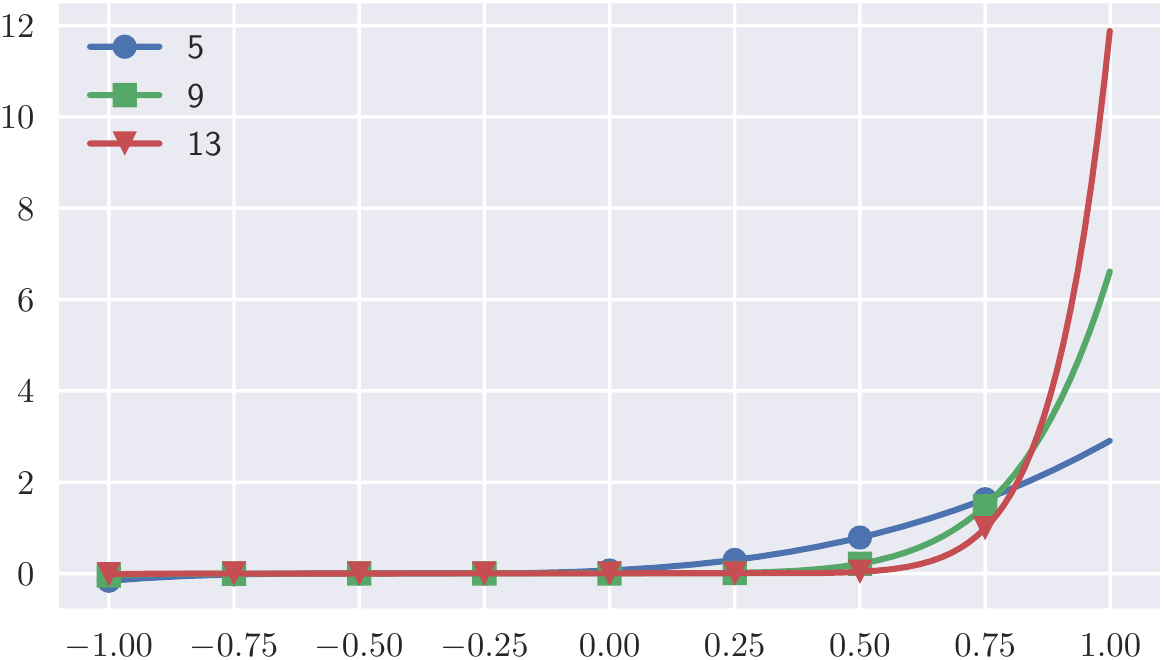}
  \caption{Newman polynomials of degree $5, 9, 13$.}
  \label{fig:newman}
\end{figure}

This section will develop the proofs of part 2 of \Cref{fact:main},
\Cref{fact:descartes}, \Cref{fact:rational_net},
and \Cref{fact:rat_sep}.

\subsection{Newman polynomials}

The starting point is a seminal result in the theory of rational
functions \citep{zolotarev,newman__relu}:
there exists a rational function of degree
$\cO(\ln(1/\eps)^2)$ which can approximate the absolute value function
along $[-1,+1]$ to accuracy $\eps>0$.  This in turn gives a way to
approximate the ReLU, since
\begin{equation}
  \srelu(x) = \max\{0,x\} = \frac {x+|x|}{2}.
  \label{eq:relu:abs}
\end{equation}

The construction here uses the \emph{Newman polynomials}
\citep{newman__relu}: given
an integer $r$, define
\[
  N_r(x) := \prod_{i=1}^{r-1} (x+ \exp(-i/\sqrt{r})).
\]
The Newman polynomials $N_5$, $N_9$, and $N_{13}$ are depicted in
\Cref{fig:newman}.  Typical polynomials in approximation theory, for
instance the Chebyshev polynomials, have very active oscillations;
in comparison, the Newman polynomials look a little funny,
lying close to 0 over $[-1,0]$, and quickly increasing monotonically
over $[0,1]$.  The seminal result of \citet{newman__relu} is that
\[
  \sup_{|x|\leq 1} \envert{ |x|
  - x \del{\frac{N_r(x)-N_r(-x)}{N_r(x)+N_r(-x)}}}
  \leq 3\exp(-\sqrt{r})/2.
\]
Thanks to this bound and \cref{eq:relu:abs},
it follows that the ReLU can be approximated to accuracy $\eps>0$
by rational functions of degree $\cO(\ln(1/\eps)^2)$.

(Some basics on Newman polynomials, as needed in the present work,
can be found in \Cref{sec:newman}.)

\subsection{Proof of \Cref{fact:rational_net}}

Now that a single ReLU can be easily converted to a rational function,
the next task is to replace every ReLU in a ReLU network with a rational
function, and compute the approximation error.
This is precisely the statement of \Cref{fact:rational_net}.

The proof of \Cref{fact:rational_net} is an induction on layers,
with full
details relegated to the appendix.  The key computation, however,
is as follows.
Let $R(x)$ denote a rational approximation to $\srelu$.
Fix a layer $i+1$,
and let $H(x)$ denote the multi-valued mapping computed by layer $i$,
and let $H_R(x)$ denote the mapping obtained by replacing each
$\srelu$ in $H$ with $R$.  Fix any node in layer $i+1$,
and let $x\mapsto\srelu(a^\top H(x) + b)$
denote its output as a function of
the input.
Then
\begin{align*}
  &\envert{ \srelu(a^\top H(x) + b) - R(a^\top H_R(x) + b) }
  \\
  &\leq
  \underbrace{
    \envert{ \srelu(a^\top H(x) + b) - \srelu(a^\top H_R(x) + b) }}_{\heartsuit}
  \\
  &\qquad+
  \underbrace{\envert{ \srelu(a^\top H_R(x) + b) - R(a^\top H_R(x) + b) }}_{\clubsuit}.
\end{align*}
For the first term $\heartsuit$, note since $\srelu$ is $1$-Lipschitz and
by H\"older's inequality that
\[
  \heartsuit \leq \envert{a^\top (H(x) - H_R(x))}
  \leq \|a\|_1 \|H(x) - H_R(x)\|_\infty,
\]
meaning this term has been reduced to the inductive hypothesis
since $\|a\|_1\leq 1$.  For the second term $\clubsuit$,
if $a^\top H_R(x) +b$ can be shown to lie in $[-1,+1]$
(which is another easy induction), then $\clubsuit$ is just the
error between $R$ and $\srelu$ on the same input.

\subsection{Proof of part 2 of \Cref{fact:main}}

It is now easy to find a rational function that approximates
a neural network, and to then bound its size.  The first step,
via \Cref{fact:rational_net}, is to replace each $\srelu$ with a
rational function $R$ of low degree (this last bit using
Newman polynomials).  The second step is to inductively
collapse the network into a single rational function.
The reason for the dependence on the number of nodes $m$
is that, unlike polynomials, summing rational functions
involves an increase in degree:
\[
  \frac {p_1(x)}{ q_1(x)}
  + \frac {p_1(x)}{ q_2(x)}
  = \frac {p_1(x) q_2(x) + p_2(x) q_1(x)}{q_1(x) q_2(x)}.
\]

\subsection{Proof of \Cref{fact:descartes}}

\begin{figure}[t]
  \centering
  \includegraphics[width=\columnwidth]{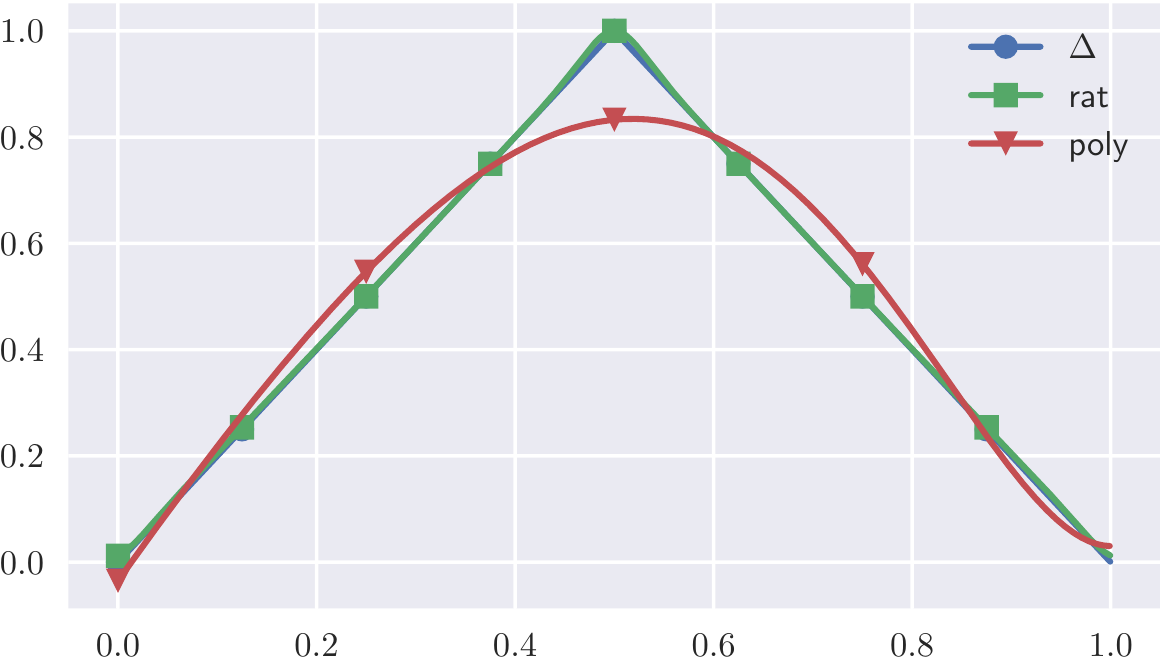}
    \caption{Polynomial and rational fit
    to $\Delta$.}
  \label{fig:triangle}
\end{figure}

The final interesting bit is to show that the dependence on
$m^l$ in part 2 of \Cref{fact:main} (where $m$ is the number of
nodes and $l$ is the number of layers) is tight.

Recall the ``triangle function''
\[
  \Delta(x) := \begin{cases}
    2x
    &x \in [0,1/2],
    \\
    2(1-x)
    &x \in (1/2,1],
    \\
    0
    &\textup{otherwise}.
  \end{cases}
\]
The $k$-fold composition $\Delta^k$ is a piecewise affine
function with $2^{k-1}$ regularly spaced peaks \citep{mjt_easy_relu}.
This function was demonstrated to be inapproximable by
shallow networks of subexponential size, and now it can be shown
to be a hard case for rational approximation as well.

Consider the horizontal line through $y=1/2$.  The function $\Delta^k$
will cross this line $2^k$ times.
Now consider a rational function $f(x) = p(x)/q(x)$.
The set of points where $f(x) = 1/2$ corresponds to points where
$2p(x) - q(x) = 0$.
A poor estimate for the number of zeros is simply the degree of
$2p - q$, however, since $f$ is univariate, a stronger tool  becomes
available: by Descartes' rule of signs, the number of zeros in $f-1/2$
is upper bounded by the number of terms in $2p - q$.

\section{Approximating rational functions with ReLU networks}
\label{sec:relu_apx_rational}

This section will develop the proof of part 1 of \Cref{fact:main},
as well as the tightness result in \Cref{fact:div:shallow}

\subsection{Proving part 1 of \Cref{fact:main}}

To establish part 1 of \Cref{fact:main}, the first step is to
approximate polynomials with ReLU networks, and the second
is to then approximate the division operation.

The representation of polynomials will be based upon constructions
due to \citet{yarotsky}.  The starting point
is the following approximation of the squaring function.

\begin{lemma}[name = {\citep{yarotsky}}]
  \label{fact:relu_square}
  Let any $\epsilon > 0$ be given.
  There exists $f : x \to [0,1]$,
  represented as a ReLU network with $\cO(\ln(1/\eps))$ nodes and layers,
  such that $\sup_{x\in [0,1]} |f(x) - x^2| \leq \eps $ and $f(0) = 0$.
\end{lemma}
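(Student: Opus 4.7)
The plan is to recover Yarotsky's construction, which approximates $x^2$ by piecewise linear interpolation at dyadic points and expresses the resulting function via iterated triangles.

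For each integer $k\geq 0$, let $g_k : [0,1]\to[0,1]$ be the piecewise linear interpolant of $x\mapsto x^2$ at the nodes $i/2^k$ for $i=0,1,\ldots,2^k$. Note $g_0(x)=x$. The first claim I would establish is the refinement identity
\[
  g_{k-1}(x) - g_k(x) \;=\; \frac{\Delta^k(x)}{4^k},
\]
where $\Delta$ is the triangle function from the proof of \Cref{fact:descartes} and $\Delta^k$ denotes $k$-fold composition. This follows because $g_k$ differs from $g_{k-1}$ only at the new dyadic points $(2j+1)/2^k$, where by convexity of $x^2$ the correction equals $1/4^k$ and is interpolated piecewise linearly; the $k$-fold triangle $\Delta^k$ is precisely the piecewise linear function with peaks of height $1$ at those points and zeros at the old ones. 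Telescoping yields
\[
  g_k(x) \;=\; x \;-\; \sum_{s=1}^{k} \frac{\Delta^s(x)}{4^s}.
\]

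Next I would bound the approximation error. Since $g_\infty = x^2$ in the limit and $0\leq \Delta^s\leq 1$,
\[
  \sup_{x\in[0,1]} \envert{g_k(x) - x^2}
  \;\leq\; \sum_{s=k+1}^{\infty} \frac{1}{4^s}
  \;=\; \frac{1}{3\cdot 4^k},
\]
so choosing $k = \lceil \tfrac12\log_2(1/(3\eps)) \rceil = \cO(\ln(1/\eps))$ suffices.

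Finally, the network. The triangle $\Delta$ is realized by a constant-size ReLU gadget, for instance $\Delta(x) = 2\srelu(x) - 4\srelu(x-\tfrac12) + 2\srelu(x-1)$, so $\Delta^s$ is obtained by stacking $s$ copies sequentially: this uses $\cO(s)$ nodes and $\cO(s)$ layers. Reusing the intermediate values $\Delta^1,\Delta^2,\ldots,\Delta^k$ as one passes through the stack (carrying the partial sum $x - \sum_{s\leq j}\Delta^s/4^s$ in parallel via skip-style linear combinations implemented by extra ReLU-identity nodes), the entire $g_k$ is computed by a ReLU network with $\cO(k) = \cO(\ln(1/\eps))$ nodes and layers. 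The boundary condition $f(0)=0$ follows automatically since $\Delta(0)=0$ implies $\Delta^s(0)=0$ for every $s$, whence $g_k(0)=0$.

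The only genuinely substantive step is the refinement identity $g_{k-1}-g_k = \Delta^k/4^k$; once that is in hand, the error bound is a geometric series and the network construction is a routine composition of fixed-size ReLU gadgets. I would expect to spend most of the write-up verifying that identity cleanly (e.g.\ by induction on $k$, checking that $\Delta^k$ vanishes at the old dyadic points and achieves height $1$ at the new ones, and that the piecewise linear structure matches).
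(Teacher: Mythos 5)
Your proposal is correct and is essentially the paper's approach: the paper simply cites Yarotsky's construction $f_k(x) = x - \sum_{i\le k}\Delta^i(x)/4^i$ and you have reconstructed it in full, including the refinement identity, the geometric-series error bound, and the $\cO(k)$-size network. The only step the paper adds is clipping the output via $\srelu(g(x)) - \srelu(g(x)-1)$ to force the range into $[0,1]$, which in your version is anyway automatic since the piecewise-linear interpolant of $x^2$ at dyadic nodes takes values in $[0,1]$ on $[0,1]$.
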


Yarotsky's proof is beautiful and deserves mention.  The approximation
of $x^2$ is the function $f_k$, defined as
\[
  f_k(x) := x - \sum_{i=1}^k \frac{\Delta^i(x)}{4^i},
\]
where $\Delta$ is the triangle map from \Cref{sec:rational_apx_relu}.
For every $k$, $f_k$ is a convex, piecewise-affine interpolation between
points along the graph of $x^2$;
going from $k$ to $k+1$ does not adjust any of these
interpolation points, but adds a new set of $\cO(2^k)$ interpolation
points.

Once squaring is in place, multiplication comes via the polarization
identity $xy = ((x+y)^2 - x^2 - y^2)/2$.

\begin{lemma}[name = {\citep{yarotsky}}]
  \label{fact:relu_mul}
  Let any $\epsilon > 0$ and $B\geq 1$ be given.
  There exists $g(x,y) : [0,B]^2 \to [0,B^2]$,
  represented by a ReLU network with $\cO(\ln(B/\eps)$ nodes and layers,
  with
  \[
    \sup_{x,y\in [0,1]} |g(x,y) - xy| \leq \epsilon
  \]
  and $g(x,y) = 0$ if $x=0$ or $y=0$.
\end{lemma}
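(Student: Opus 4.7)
The plan is to reduce multiplication to squaring via the polarization identity
\[
  xy \;=\; \frac{(x+y)^2 - x^2 - y^2}{2},
\]
flagged in the paragraph preceding the lemma, and then invoke \Cref{fact:relu_square}. The one subtlety relative to a naive implementation is that I will use the \emph{same} squaring subnetwork $s$ for all three applications, operating on a common input range wide enough to accommodate each argument; this is what will deliver the exact boundary condition $g(x,y)=0$ on the axes at essentially no cost.

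First I would rescale so that \Cref{fact:relu_square}, which operates on $[0,1]$, can handle inputs up to $2B$. Invoke the lemma with accuracy $\eps/(24B^2)$ to obtain $\bar s:[0,1]\to[0,1]$ with $\bar s(0)=0$ and $|\bar s(u)-u^2|\leq \eps/(24B^2)$ at size $\cO(\ln(B/\eps))$, and set $s(z) := 4B^2\bar s(z/(2B))$, which is a ReLU network of the same asymptotic size with $s(0)=0$ and $|s(z)-z^2|\leq \eps/6$ uniformly on $[0,2B]$. Second, assemble
\[
  g(x,y) \;:=\; \frac{s(x+y)\,-\,s(x)\,-\,s(y)}{2},
\]
noting that for $(x,y)\in[0,B]^2$ all three arguments to $s$ lie in $[0,2B]$. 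The overhead beyond the three invocations of $s$ is a constant number of affine units, so the total size remains $\cO(\ln(B/\eps))$.

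Then I would check the three required properties. For accuracy, the triangle inequality gives
\[
  |g(x,y)-xy| \;\leq\; \frac{|s(x+y)-(x+y)^2|+|s(x)-x^2|+|s(y)-y^2|}{2} \;\leq\; \frac{\eps}{4},
\]
uniformly over $(x,y)\in[0,B]^2$ and hence over $[0,1]^2$. For the boundary condition, if $x=0$ then $g(0,y) = \tfrac12(s(y)-s(0)-s(y)) = -s(0)/2 = 0$ because $s(0)=0$; the case $y=0$ is symmetric. Finally, to enforce the stated codomain $[0,B^2]$, I would append the clamp $u\mapsto B^2-\srelu(B^2-\srelu(u))$, which adds two ReLU nodes, preserves exact zeros, and costs no accuracy since $xy\in[0,B^2]$.

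The step that requires the most care is the exact zero-on-axes condition: with three independently chosen squaring approximations (distinct accuracies or mismatched ranges) the residuals do not cancel at $x=0$ or $y=0$, and one would need a further gadget (for instance a ReLU-built indicator on each coordinate) to enforce the boundary. Reusing a single $s$ on the common range $[0,2B]$ sidesteps this difficulty, because the two occurrences of $s(y)$ in the numerator cancel by inspection and the remaining $s(0)$ is guaranteed to vanish by \Cref{fact:relu_square}; everything else is routine error budgeting.
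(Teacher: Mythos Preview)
Your proposal is correct and follows essentially the same route as the paper: both use the polarization identity with a single shared squaring subnetwork (so that the two occurrences of $s(y)$ cancel exactly on the axes and $s(0)=0$ handles the rest), then clamp to enforce the codomain. The only cosmetic difference is that the paper first treats $B=1$ via $h(x,y)=2\bigl(f((x+y)/2)-f(x/2)-f(y/2)\bigr)$ and afterwards rescales by $g(x,y)=B^2\tilde g(x/B,y/B)$, whereas you fold the $B$-scaling directly into the squaring approximation; the error budgeting and boundary argument are otherwise identical.
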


Next, it follows that ReLU networks can efficiently approximate
exponentiation thanks to repeated squaring.

\begin{lemma}
  \label{fact:relu_exp}
  Let $\eps \in (0,1]$ and positive integer $y$ be given.
  There exists $h : [0,1]\to[0,1]$,
  represented by a ReLU network with $\cO(\ln(y/\eps)^2)$ nodes and layers,
  with
  \[
    \sup_{x,y\in [0,1]} \envert{ h(x) - x^y } \leq \epsilon
  \]
\end{lemma}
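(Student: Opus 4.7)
The plan is to combine the binary expansion of $y$ with repeated squaring. Writing $y = \sum_{i \in S} 2^i$ where $S \subseteq \{0,1,\ldots,k\}$ indexes the nonzero bits and $k := \lceil \log_2 y \rceil$, one has $x^y = \prod_{i \in S} x^{2^i}$. The network first builds $u_0,\ldots,u_k$ approximating $x, x^2, x^4,\ldots, x^{2^k}$ by setting $u_0 := x$ and $u_{i+1} := f(u_i)$, where $f$ is Yarotsky's squaring gadget (\Cref{fact:relu_square}) of accuracy $\delta$; it then accumulates the product of those $u_i$ with $i \in S$ using $|S|-1$ copies of the multiplication gadget (\Cref{fact:relu_mul}) of accuracy $\eta$, taking $B = 1$.

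Since $x \in [0,1]$, every $x^{2^i}$ lies in $[0,1]$ and each squaring and multiplication gadget outputs in $[0,1]$ as well, so the compositions are well-defined. Using that $z\mapsto z^2$ is $2$-Lipschitz on $[0,1]$,
\[
  |u_{i+1} - x^{2^{i+1}}|
  \leq |f(u_i) - u_i^2| + |u_i^2 - (x^{2^i})^2|
  \leq \delta + 2|u_i - x^{2^i}|,
\]
so inductively $|u_k - x^{2^k}| \leq (2^{k+1}-1)\delta \leq 4y\delta$. For the product phase, each partial product and each $u_i$ lies in $[0,1]$, where multiplication is $1$-Lipschitz in each argument; a simple induction over at most $k+1$ iterated multiplications, each incurring $\eta$ of gadget error on top of the inherited input error, bounds the total error by $\cO(k\eta + ky\delta)$.

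Choosing $\delta := \eps/(ky)$ and $\eta := \eps/k$ makes the overall error $\cO(\eps)$ (the constant is absorbed by a standard rescaling). Each squaring gadget then uses $\cO(\ln(1/\delta)) = \cO(\ln(y/\eps))$ nodes and layers, and each multiplication gadget uses $\cO(\ln(1/\eta)) = \cO(\ln(y/\eps))$; with $\cO(k) = \cO(\ln y)$ gadgets chained in series, both the node count and the depth are $\cO(\ln(y)\ln(y/\eps)) = \cO(\ln(y/\eps)^2)$.

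The only delicate point is the geometric factor-of-$2$ amplification in the squaring chain: the derivative bound of $2$ for $z \mapsto z^2$ on $[0,1]$ is essentially tight, so a naive chain of length $k$ could blow up errors by $2^k$. The saving grace is that $k = \cO(\log y)$ rather than $y$, so the amplification is only a factor of $y$, which translates into one extra logarithm in the per-gadget accuracy and yields precisely the square in $\cO(\ln(y/\eps)^2)$.
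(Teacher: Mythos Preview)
Your proof is correct and uses essentially the same idea as the paper: fast exponentiation via the binary expansion of $y$ and repeated squaring, with $\cO(\log y)$ gadgets each of size $\cO(\ln(y/\eps))$. The only organizational difference is that the paper runs the Russian-peasant (left-to-right) variant with a single accumulator $v$ --- squaring $v$ at every bit and multiplying by $x$ when the bit is $1$ --- whereas you separate the computation into two phases (first build all $u_i\approx x^{2^i}$, then multiply the selected ones). Correspondingly, the paper's invariant is $|v-x^{z_j}|\le z_j^2\,\eps/y^2$ with the square arising because the accumulator itself is squared each step, while your invariant tracks the geometric $2^i$ blowup along the squaring chain and then sums errors linearly in the product phase; both analyses land on the same $\cO(\ln(y/\eps)^2)$ size and depth.
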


With multiplication and exponentiation, a representation result
for polynomials follows.

\begin{lemma}
  \label{fact:relu_polynomial}
  Let $\eps \in (0,1]$ be given.
  Let $p: [0,1]^d\to[-1,+1]$ denote a polynomial with $\leq s$ monomials,
  each with degree $\leq r$ and scalar coefficient within $[-1,+1]$.
  Then there exists a function $q : [0,1]^d\to[-1,+1]$
  computed by a network of size
  $\cO\del{ \min\{ sr\ln(sr/\epsilon), sd\ln(dsr/\epsilon)^2 \} }$,
  which satisfies $\sup_{x\in[0,1]^d} |p(x) - q(x)| \leq \epsilon$.
\end{lemma}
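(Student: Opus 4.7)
The plan is to reduce the problem to approximating a single monomial. Since the scalar coefficient of each monomial lies in $[-1,+1]$ and can be folded into the weights of the final linear output layer at no ReLU cost, it suffices to compute each of the $s$ bare monomials $m_i(x)=\prod_{j=1}^{d} x_j^{a_{i,j}}$ (with $\sum_j a_{i,j}\leq r$) to error $\eps/s$ and then take a signed sum; by the triangle inequality this gives total error $\eps$. The two branches of the $\min$ in the size bound correspond to two different schemes for computing each monomial.

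For the $\cO(sr\ln(sr/\eps))$ branch, I would treat each monomial as an iterated product of at most $r$ inputs from $[0,1]$, repeating variables according to their exponents, and compute it left-to-right using $r-1$ invocations of \Cref{fact:relu_mul}, each tuned to accuracy $\delta := c\eps/(sr)$ for a small constant $c$; \Cref{fact:relu_mul} then yields size $\cO(\ln(sr/\eps))$ per multiplier. Since each true intermediate product lies in $[0,1]$, the bound $|ab-\tilde a\tilde b|\leq |a-\tilde a|+|b-\tilde b|$ propagates to a total error of at most $(r-1)\delta\leq \eps/s$ per monomial, and summing sizes over $s$ monomials gives $\cO(sr\ln(sr/\eps))$ nodes.

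For the $\cO(sd\ln(dsr/\eps)^2)$ branch, I would instead first approximate the at most $d$ nontrivial pure powers $x_j^{a_{i,j}}$ via \Cref{fact:relu_exp}, each to accuracy $\delta' := c\eps/(sd)$ at cost $\cO(\ln(dsr/\eps)^2)$ per power, and then combine the $\leq d$ resulting values with at most $d-1$ approximate multipliers of accuracy $\delta'$ and cost $\cO(\ln(dsr/\eps))$ each. The same accumulation argument gives per-monomial error $\cO(d\delta')\leq \eps/s$, and with the exponentiation dominating the cost, summing across $s$ monomials produces $\cO(sd\ln(dsr/\eps)^2)$ nodes.

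The main obstacle in executing this plan is the bookkeeping of error propagation, together with the side condition that every approximate intermediate value must sit in the input domain $[0,B]$ of \Cref{fact:relu_mul}. Since the true products sit in $[0,1]$ and their approximations differ by at most the accumulated error, this can be ensured by inserting a clamp $z\mapsto \srelu(z)-\srelu(z-1)$ between successive multipliers, which adds only $\cO(1)$ nodes per multiplier and is absorbed into the asymptotics. Once the per-step accuracies are chosen as $\Theta(\eps/(sr))$ or $\Theta(\eps/(sd))$ respectively, and the clamped intermediates are checked to stay bounded, the final size estimates follow directly from \Cref{fact:relu_mul} and \Cref{fact:relu_exp}.
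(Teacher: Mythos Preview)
Your proposal is correct and matches the paper's proof essentially step for step: the paper likewise reduces to approximating each monomial to error $\eps/s$ and offers the same two constructions, a left-to-right product of $\leq r$ factors using \Cref{fact:relu_mul} at accuracy $\eps/(sr)$, and a per-coordinate fast exponentiation via \Cref{fact:relu_exp} at accuracy $\eps/(ds)$ followed by $\leq d$ multiplications. Your explicit clamp between multipliers is a harmless addition (the paper's multiplier from \Cref{fact:relu_mul} already outputs into $[0,1]$), and your error-accumulation sketch coincides with the paper's induction.
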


The remainder of the proof now focuses on the division operation.
Since multiplication has been handled, it suffices to compute
a single reciprocal.

\begin{figure}[t]
  \centering
  \includegraphics[width=\columnwidth]{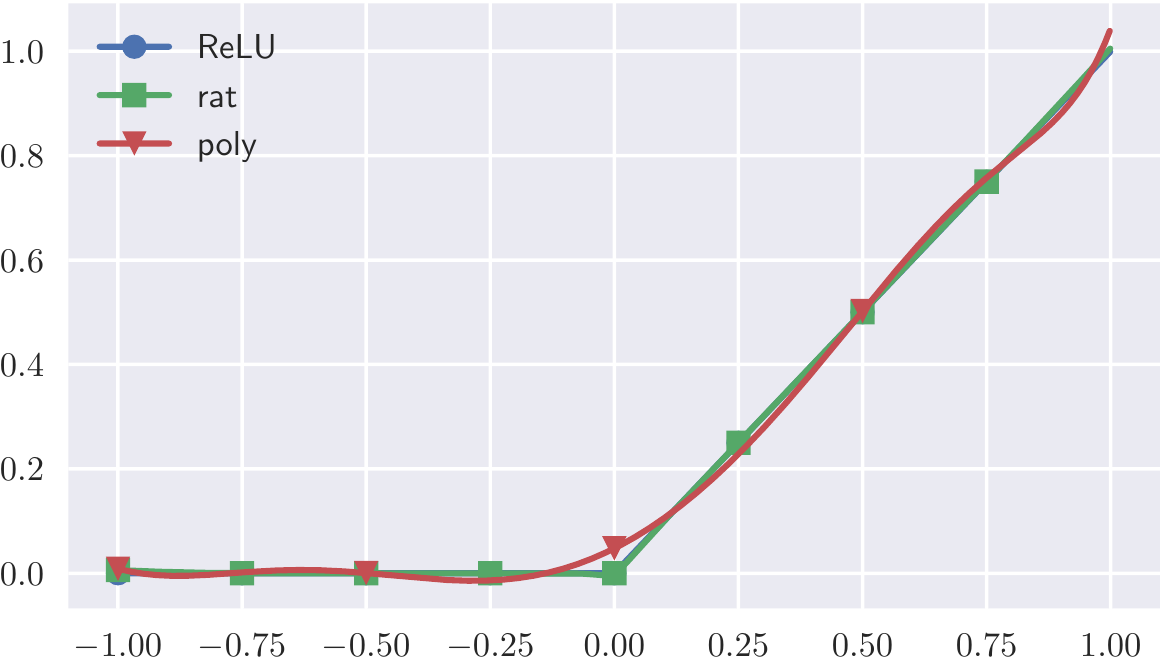}
  \caption{Polynomial and rational fit
    to $\srelu$.}
  \label{fig:relu}
\end{figure}

\begin{lemma}
  \label{fact:relu_invert}
  Let $\epsilon \in (0,1]$ and nonnegative integer $k$ be given.
  There exists a ReLU network $q : [2^{-k}, 1] \to [1,2^k]$,
  of size $\cO(k^2\ln(1/\eps)^2)$
  and depth $\cO(k^4 \ln(1/\eps)^3)$
  such that
  \[
    \sup_{x\in [2^{-k}, 1]} \envert{ q(x) - \frac 1 x} \leq \epsilon.
  \]
\end{lemma}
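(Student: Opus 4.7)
The plan is to build approximate division from three ingredients already at hand: the approximate multiplier of \Cref{fact:relu_mul}, an approximate reciprocal that only needs to work on the fixed interval $[1/2,1]$, and a piecewise linear ``continuous switch statement'' that identifies which dyadic scale contains $x$. This mirrors the switch-based strategy flagged in the introduction.

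First I would build a small ReLU subnetwork $N$ that approximates $y\mapsto 1/y$ on $[1/2,1]$. A natural choice is Newton's iteration $y\mapsto y(2-yx)$: starting from $y_0=1$, the residual $1-xy_n$ squares at each step, so $\cO(\log\log(1/\eps))$ iterations drive the error below any target, and each iteration is two invocations of \Cref{fact:relu_mul} on quantities bounded by a constant. Alternatively one could truncate the geometric series $\sum_{n\le N}(1-y)^n$, a polynomial of degree $\cO(\log(1/\eps))$, and hand it to \Cref{fact:relu_polynomial}. Either way the normalized reciprocal is cheap and shallow.

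Next I would construct the switch. For $j=0,1,\ldots,k$ let $b_j$ be a trapezoidal bump built from $\cO(1)$ ReLU gates, equal to $1$ on $[2^{-j-1},2^{-j}]$ and vanishing just outside that interval, with $\sum_j b_j\equiv 1$ on $[2^{-k},1]$. By design, whenever $b_j(x)>0$ the rescaling $2^{j+1}x$ lands safely in $[1/2,1]$, so the identity $1/x=2^{j+1}\cdot(1/(2^{j+1}x))$ reduces each active branch to a call to $N$. The final output is
\[
  q(x)\;=\;\sum_{j=0}^{k} b_j(x)\cdot 2^{j+1}\cdot N\bigl(2^{j+1}x\bigr),
\]
with every product implemented via \Cref{fact:relu_mul}, and with only $\cO(1)$ of the $b_j$ nonzero at any given input.

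The main obstacle is error accounting. Because $1/x$ itself can be as large as $2^k$, an absolute error of $\eps$ at the output forces each active reciprocal $N(2^{j+1}x)$ and each approximate multiplication to be individually accurate to roughly $\eps\cdot 2^{-k}$; pushing this smaller target through \Cref{fact:relu_mul} and through the Newton loop replaces every $\ln(1/\eps)$ by $\ln(2^k/\eps)=\cO(k+\ln(1/\eps))$. Tracking this inflation through (i) the mults inside Newton, (ii) the scalar rescaling $2^{j+1}\cdot N(\cdot)$, and (iii) the outer multiplication $b_j\cdot(\cdot)$, and then summing across the $\cO(k)$ scales, is what yields the size $\cO(k^2\ln(1/\eps)^2)$ and depth $\cO(k^4\ln(1/\eps)^3)$ in the statement. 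The conceptual subtlety is ensuring that the bumps $b_j$ keep $2^{j+1}x$ inside the domain on which $N$ was calibrated and that all intermediate magnitudes remain small enough for \Cref{fact:relu_mul} to be used at the cheaper $B=\cO(1)$ rate; the remainder is careful but routine bookkeeping.
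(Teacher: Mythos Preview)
Your proposal is essentially the paper's own proof: decompose $[2^{-k},1]$ into dyadic scales, build a local reciprocal on each scale via the truncated geometric series (your rescaling $2^{j+1}x\in[1/2,1]$ is exactly the paper's choice $c=1/b$ in \Cref{fact:relu_invert_nounity}), and stitch the pieces with a partition-of-unity switch combined through \Cref{fact:relu_mul} (the paper packages this step as \Cref{fact:relu_part_unity}). The Newton-iteration variant you mention is not used in the paper but would work and could only tighten the constants; note also that the size and depth bounds as printed in the lemma are swapped relative to what both your argument and the paper's own proof actually produce.
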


This proof relies on two tricks.  The first is to observe,
for $x\in (0,1]$, that
\[
  \frac 1 x
  = \frac 1 {1- (1-x)}
  = \sum_{i\geq 0} (1-x)^i.
\]
Thanks to the earlier development of exponentiation, truncating this summation
gives an expression easily approximate by a neural network as follows.

\begin{lemma}
  \label[lemma]{fact:relu_invert_nounity}
  Let $0 < a \leq b$ and $\eps > 0$ be given.
  Then there exists a ReLU network $q:\R\to\R$
  with $\cO(\ln(1/(a\eps))^2)$ layers
  and $\cO((b/a)\ln(1/(a\eps))^3)$ nodes
  satisfying
  \[
    \sup_{x\in[a,b]} \envert{ q(x) - \frac 1 x } \leq 2\eps.
  \]
\end{lemma}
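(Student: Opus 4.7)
The plan is to exploit the geometric identity
\[
\frac{1}{x} = \frac{1}{b}\sum_{i=0}^\infty (1 - x/b)^i,
\]
which converges because the substitution $y := 1 - x/b$ lies in $[0, 1-a/b] \subset [0,1)$ whenever $x \in [a,b]$. Since $y$ is produced by a single affine map from $x$, the task reduces to approximating each monomial $y^i$ by a ReLU network and summing the results, scaled by $1/b$.

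First I truncate the series at $N$ terms. The tail equals $(1/b)\cdot y^N/(1-y) = y^N/x$, which is bounded by $(1-a/b)^N/a$; the standard inequality $-\ln(1-t)\ge t$ then shows that $N := \lceil (b/a)\ln(1/(a\eps))\rceil$ suffices to bring this below $\eps$. Second, I invoke \Cref{fact:relu_exp} on each $y^i$ with $1 \le i \le N-1$ at target accuracy $\eps' := a\eps/N$; because $y \in [0,1]$, each such approximation costs $\cO(\ln(i/\eps')^2) = \cO(\ln(1/(a\eps))^2)$ nodes and depth. Third, I set $q(x) := (1/b)\sum_{i=0}^{N-1} \hat p_i(y)$ where $\hat p_i$ denotes the network approximating $y^i$; the triangle inequality yields
\[
\envert{q(x) - 1/x} \le \frac{1}{b}\sum_{i=0}^{N-1} |\hat p_i(y) - y^i| + \frac{y^N}{x} \le \frac{N\eps'}{b} + \eps \le 2\eps.
\]

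The $N$ monomial approximations are computed in parallel, so the overall depth is $\cO(\ln(1/(a\eps))^2)$ (with only constantly many additional layers for the initial affine map producing $y$ and the final linear combination), and the total node count is $N\cdot \cO(\ln(1/(a\eps))^2) = \cO((b/a)\ln(1/(a\eps))^3)$.

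The main delicate point is calibrating $\eps'$: it must be small enough that the $N$ summed per-term errors still total at most $\eps$, yet $\ln(1/\eps')$ must remain on the order of $\ln(1/(a\eps))$ so that the per-term cost from \Cref{fact:relu_exp} does not inflate. The choice $\eps' = a\eps/N$ threads this needle because $N$ is only polynomial in $b/a$ and $\ln(1/(a\eps))$, whose logarithm is absorbed into the outer $\cO$-notation. A minor routine step is verifying that $y$ sits in $[0,1]$ as required by the hypotheses of \Cref{fact:relu_exp}, which is immediate from $0 < a \le x \le b$.
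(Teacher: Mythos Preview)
Your proof is correct and follows essentially the same route as the paper's own argument: expand $1/x$ as the geometric series $(1/b)\sum_{i\ge 0}(1-x/b)^i$, truncate after $N=\lceil (b/a)\ln(1/(a\eps))\rceil$ terms so the tail is at most $\eps$, and approximate each power $(1-x/b)^i$ via \Cref{fact:relu_exp} at a per-term accuracy fine enough that the $N$ errors sum to at most $\eps$. The only cosmetic differences are your choice $\eps'=a\eps/N$ versus the paper's $\eps_0=\eps b/r^2$ and an off-by-one in the truncation index; both lead to the same size and depth bounds.
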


Unfortunately, \Cref{fact:relu_invert_nounity} differs
from the desired statement \Cref{fact:relu_invert_nounity}:
inverting inputs lying within $[2^{-k},1]$ requires $\cO(2^k\ln(1/\eps)^2)$ nodes
rather than $\cO(k^4 \ln(1/\eps)^3)$!

To obtain a good estimate with only $\cO(\ln(1/\eps))$ terms of the summation, it
is necessary for the input to be $x$ bounded below by a positive constant
(not depending on $k$).
This leads to the second trick
(which was also used by \citet{ltf_division}!).

Consider, for positive constant $c>0$, the expression
\[
  \frac 1 x
  = \frac {c}{1 - (1- cx)}
  = c \sum_{i\geq 0} (1-cx)^i.
\]
If $x$ is small, choosing a larger $c$ will cause this summation
to converge more quickly.  Thus, to compute $1/x$ accurately
over a wide range of inputs,
the solution here is to multiplex approximations
of the truncated sum for \emph{many} choices of $c$.   In order to only
rely on the value of one of them, it is possible to encode a large
``switch'' style statement in a neural network.  Notably,
rational functions can also representat switch statements
\citep[Theorem 5.2]{petrusev_popov__rational},
however polynomials can not (otherwise they could
approximate the ReLU more efficiently, seeing as it is a switch
statement of 0 (a degree 0 polynomial) and $x$ (a degree 1 polynomial).

\begin{lemma}
  \label[lemma]{fact:relu_part_unity}
  Let $\eps>0$,
  $B\geq 1$,
  reals $a_0\leq a_1 \leq \cdots \leq a_n \leq a_{n+1}$ and a
  function $f: [a_0,a _{n+1}] \to \R$ be given.  Moreover, suppose
  for $i \in \{1,\ldots,n\}$, there exists a ReLU network $g_i:\R\to\R$
  of size $\leq m_i$ and depth $\leq k_i$ with
  $g_i\in[0,B]$ along $[a_{i-1},a_{i+1}]$
  and
  \[
    \sup_{x\in [a_{i-1}, a_{i+1}]} |g_i(x) - f| \leq \eps.
  \]
  Then there exists a function $g:\R\to\R$ computed by a ReLU network
  of size $\cO\del{n\ln(B/\eps) + \sum_i m_i}$
  and depth $\cO\del{\ln(B/\eps) + \max_i k_i}$
  satisfying
  \[
    \sup_{x\in [a_1, a_n]} |g(x) - f(x)| \leq 3\eps.
  \]
\end{lemma}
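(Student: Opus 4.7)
The plan is to build an exact ReLU partition of unity $(\phi_i)_{i=1}^n$ adapted to the mesh $(a_i)$, then return the multiplexed value $g(x) := \sum_{i=1}^n M_i(x)$, where $M_i$ is an approximate ReLU product of $\phi_i$ with a clipped copy of $g_i$. For each $i$, I would let $\phi_i:\R\to[0,1]$ be the tent with peak $1$ at $a_i$ and support $[a_{i-1},a_{i+1}]$; this is representable by two ReLU units, and the collection $\{\phi_i\}_{i=1}^n$ is an \emph{exact} partition of unity on $[a_1,a_n]$ with at most two indices active at any point. Since the multiplication gadget of \Cref{fact:relu_mul} expects inputs in $[0,B]$, I would then clip each $g_i$ by forming $\bar g_i(x) := B - \srelu(B - \srelu(g_i(x)))$, which adds only constant size and depth and agrees with $g_i$ wherever $g_i \in [0,B]$, in particular throughout $[a_{i-1},a_{i+1}]$.

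Next, for each $i$ I would invoke \Cref{fact:relu_mul} at accuracy $\eps$ and bound $B$ to obtain $M_i(x)$ of size and depth $\cO(\ln(B/\eps))$ with $|M_i(x) - \phi_i(x)\bar g_i(x)| \leq \eps$ and, crucially, $M_i(x) = 0$ exactly whenever $\phi_i(x) = 0$ (using the property that the gadget returns $0$ when either factor is $0$). The size bound $\cO(n\ln(B/\eps) + \sum_i m_i)$ follows by collecting the $\phi_i$, $\bar g_i$, and $M_i$, and the depth bound $\cO(\ln(B/\eps) + \max_i k_i)$ is analogous, since each $M_i$ stacks one $\cO(\ln(B/\eps))$-depth block on top of its inputs while the $\phi_i$ have constant depth.

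For the error analysis, I would fix $x \in [a_1, a_n]$ and let $S(x)$ denote the (at most two) indices with $\phi_i(x) \neq 0$. For $i \in S(x)$ we have $x \in [a_{i-1},a_{i+1}]$, so $\bar g_i(x) = g_i(x)$ and $|g_i(x) - f(x)| \leq \eps$. Using $\sum_{i \in S(x)} \phi_i(x) = 1$,
\begin{align*}
  |g(x) - f(x)|
  &\leq \sum_{i \in S(x)} \envert{ M_i(x) - \phi_i(x)f(x) }
  \\
  &\leq \sum_{i \in S(x)} \del{ \eps + \phi_i(x)\envert{g_i(x) - f(x)} }
  \\
  &\leq 2\eps + \eps \;=\; 3\eps.
\end{align*}

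The main obstacle, and really the whole point of the construction, is arranging for indices outside $S(x)$ to contribute \emph{exactly} zero to $g(x)$; if even an $\eps$ multiplication error per inactive index were allowed, the total error would blow up to $\Theta(n\eps)$ and the size would have to carry a compensating $\ln n$ factor. The exact vanishing of the tent $\phi_i$ outside its support, combined with the $g(0,y)=0$ guarantee of Yarotsky's multiplication gadget from \Cref{fact:relu_mul}, is precisely what localises the sum to the two active indices and delivers the stated $3\eps$ bound and the additive size $\cO(n\ln(B/\eps) + \sum_i m_i)$.
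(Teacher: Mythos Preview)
Your proposal is correct and follows essentially the same route as the paper: tent functions $\phi_i$ forming an exact partition of unity on $[a_1,a_n]$, Yarotsky's multiplication gadget with its exact-zero-at-zero property to localise the sum to the two active indices, and the same $2\eps + \eps = 3\eps$ error split. The one difference is that you explicitly clip each $g_i$ to $[0,B]$ before feeding it to the multiplier, whereas the paper feeds $g_i$ in directly; the paper gets away with this because the gadget's identity $h(0,y)=0$ holds for all $y$ (not just $y\in[0,B]$), so inactive terms vanish regardless, and active terms already have $g_i(x)\in[0,B]$ by hypothesis. Your clipping is a harmless extra safeguard.
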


\subsection{Proof of \Cref{fact:div:shallow}}

It remains to show that shallow networks have a hard time approximating
the reciprocal map $x\mapsto 1/x$.

This proof uses the same scheme as various proofs in
\citep{mjt_nn},
which was also followed in more recent works
\citep{yarotsky,ohad__poly}: the idea is to first
upper bound the number of affine pieces in ReLU networks of a certain
size, and then to point out that each linear segment must make
substantial error on a curved function, namely $1/x$.

The proof is fairly brute force, and thus relegated to the appendices.

\section{Summary of figures}

Throughout this work, a number of figures were presented to show
not only the astonishing approximation properties of rational functions,
but also the higher fidelity approximation achieved by both ReLU networks
and rational functions as compared with polynomials.  Of course,
this is only a qualitative demonstration, but still lends some intuition.

In all these demonstrations, rational functions and polynomials have
degree 9 unless otherwise marked.  ReLU networks have two hidden layers
each with 3 nodes.  This is not exactly apples to apples (e.g., the
rational function has twice as many parameters as the polynomial),
but still reasonable as most of the approximation literature
fixes polynomial and rational degrees in comparisons.

\Cref{fig:recip} shows the ability of all three classes to approximate
a truncated reciprocal.  Both rational functions and ReLU networks have
the ability to form ``switch statements'' that let them approximate
different functions on different intervals with low complexity
\citep[Theorem 5.2]{petrusev_popov__rational}.
Polynomials lack this ability; they can not even approximate
the ReLU well, despite it being low degree polynomials on two separate
intervals.

\Cref{fig:thresh} shows that rational functions can fit the threshold
function errily well; the particular rational function used
here is based on using Newman polynomials to approximate
$(1 + |x|/x)/2$ \citep{newman__relu}.

\Cref{fig:newman} shows Newman polynomials $N_5$, $N_9$, $N_{13}$.
As discussed in the text, they are unlike orthogonal polynomials,
and are used in all rational function approximations except
\Cref{fig:recip}, which used a least squares fit.

\begin{figure}[t]
  \centering
    \includegraphics[width=\columnwidth]{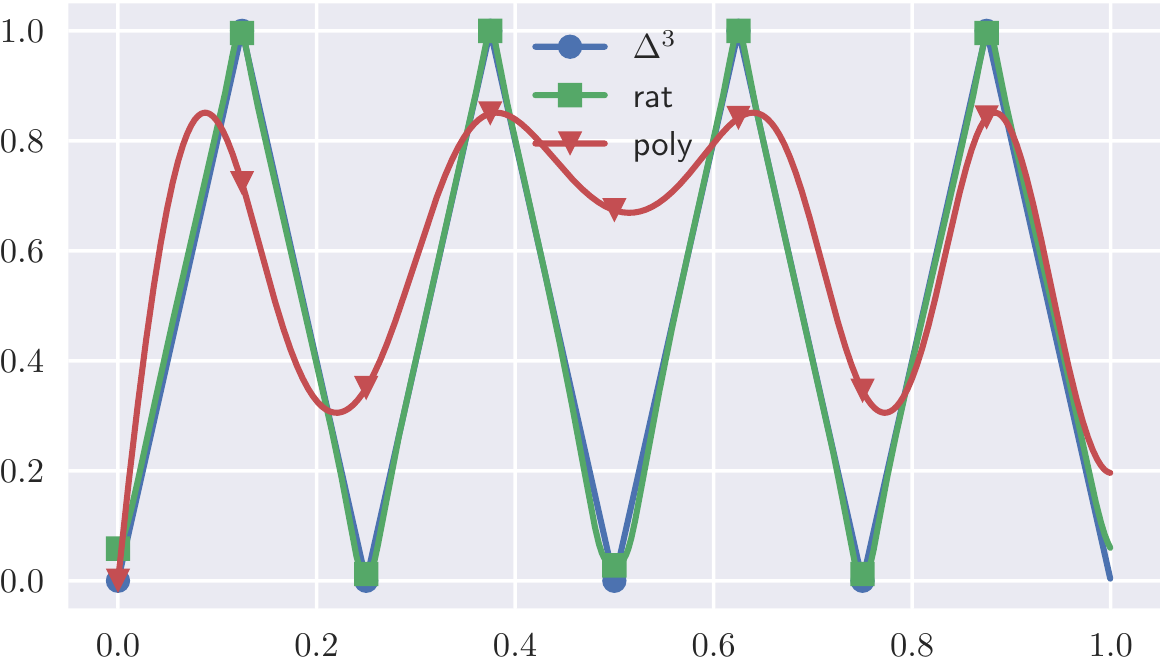}
  \caption{Polynomial and rational fit
    to $\Delta^3$.}
  \label{fig:triangle_3}
\end{figure}

\Cref{fig:triangle} shows that rational functions (via the Newman
polynomials) fit $\Delta$ very well, whereas polynomials have trouble.
These errors degrade sharply after recursing, namely when
approximating $\Delta^3$ as in \Cref{fig:triangle_3}.

\Cref{fig:relu} shows how polynomials and rational functions fit
the ReLU, where the ReLU representation, based on Newman polynomials,
is the one used in the proofs here.  Despite the apparent
slow convergence of polynomials in this regime, the polynomial fit
is still quite respectable.

\section{Open problems}

There are many next steps for this and related results.

\begin{enumerate}
  \item
    Can rational functions, or some other approximating class,
    be used to more tightly bound the generalization properties
    of neural networks?  Notably, the VC dimension of sigmoid
    networks uses a conversion to polynomials
    \citep{anthony_bartlett_nn}.

  \item
    Can rational functions, or some other approximating class,
    be used to design algorithms for training neural networks?
    It does not seem easy to design reasonable algorithms
    for minimization over rational functions; if this is fundamental
    and moreover in contrast with neural networks,
    it suggests an algorithmic benefit of neural networks.

  \item
    Can rational functions, or some other approximating class,
    give a sufficiently refined complexity estimate of neural
    networks which can then be turned into a regularization scheme
    for neural networks?
\end{enumerate}

\section*{Acknowledgements}
The author thanks Adam Klivans and Suvrit Sra for stimulating conversations.
Adam Klivans and the author both thank Almare Gelato Italiano, in downtown Berkeley,
for necessitating further stimulating conversations,
but now on the topic of health and exercise.
Lastly, the author thanks the University of Illinois, Urbana-Champaign,
and the Simons Institute in Berkeley,
for financial support during this work.

\bibliography{bib}
\bibliographystyle{icml2017}

\appendix

\onecolumn

\section{Deferred material from \Cref{sec:rational_apx_relu}}

This section collects technical material omitted from
\Cref{sec:rational_apx_relu}.  The first step is to fill in some
missing details regarding Newman polynomials.

\subsection{Newman polynomials}
\label{sec:newman}

Define the \emph{Newman polynomial} \citep{newman__relu}
\begin{equation}
  N_r(x) := \prod_{i=1}^{r-1} (x + \alpha_r^i)
  \qquad
  \textup{where }
  \alpha_r := \exp(-1/\sqrt{r}).
\end{equation}
Define $A_r(x)$, the Newman approximation to $|x|$, as
\[
  A_r(x) := x \del{ \frac {N_r(x) - N_r(-x)}{N_r(x) + N_r(-x)} }.
\]

\begin{lemma}[name = {\citet{newman__relu}}]
  \label{fact:newman}
  Suppose $r\geq 5$.
  \begin{itemize}
    \item
      $N_r(x) + N_r(-x) > 0$;
      in particular, $A_r$ is well-defined over $\R$.
    \item
      Given any $b \geq 1$,
      \[
        \sup_{x \in[-b,+b]} \envert{
          bA_r(x/b) - |x|
        }
        \leq 3b\exp(-\sqrt{r}).
      \]
  \end{itemize}
\end{lemma}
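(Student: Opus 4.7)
The plan is to drive both parts through a uniform control on the ratio $\rho_r(x) := N_r(-x)/N_r(x)$ over $[0,1]$: I will show that $\rho_r > -1$ everywhere (equivalently, the denominator of $A_r$ is strictly positive), and in fact that $|\rho_r(x)| \leq \exp(-\sqrt{r})$ once $x$ exceeds a small threshold. The natural place to split the argument is at $x_\star := \alpha_r^{r-1} = \exp(-(r-1)/\sqrt{r})$.

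For part 1, on $[0, x_\star]$ every factor $\alpha_r^i - x$ is non-negative since $\alpha_r^i \geq \alpha_r^{r-1} = x_\star$, so both $N_r(x)$ and $N_r(-x)$ are non-negative with $N_r(x) > 0$, and positivity of the sum is immediate. On $(x_\star, 1]$ I would write
\[
  |\rho_r(x)| = \prod_{i=1}^{r-1} \frac{|\alpha_r^i - x|}{\alpha_r^i + x},
\]
locate the unique $j$ with $\alpha_r^{j+1} \leq x \leq \alpha_r^j$, and bound each factor through its logarithm. Each factor is at most $1$, and those near index $j$ are substantially smaller than $1$; summing the logarithmic contributions along the geometrically spaced grid $\{\alpha_r^i\}$ (whose log-spacing is $1/\sqrt{r}$) yields a telescope that gives $|\rho_r(x)| \leq \exp(-\sqrt{r})$. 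Consequently $N_r(x) + N_r(-x) \geq (1 - \exp(-\sqrt{r})) N_r(x) > 0$ on $[x_\star, 1]$, and by evenness of $x \mapsto N_r(x) + N_r(-x)$ this extends to all of $\R$.

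For part 2, the substitution $y := x/b$ together with $|x| = b|y|$ reduces the claim to the case $b = 1$. Since $A_r$ is even (odd $x$ times the odd polynomial $N_r(x) - N_r(-x)$, divided by the even denominator), it matches $|\cdot|$ in parity, and it suffices to bound the error for $x \in [0,1]$, where a one-line manipulation gives
\[
  |x| - A_r(x) = \frac{2 x\, N_r(-x)}{N_r(x) + N_r(-x)}.
\]
On $[0, x_\star]$ both $N_r(\pm x)$ are non-negative, so the fraction is at most $1$ and the whole expression is at most $2x_\star = 2\exp(1/\sqrt{r})\exp(-\sqrt{r})$; on $(x_\star,1]$ the telescoping bound from part 1 yields at most $2\exp(-\sqrt{r})/(1 - \exp(-\sqrt{r}))$. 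Combining the two cases and using $r \geq 5$ delivers the stated constant of $3$. The technical heart, and the main obstacle, is the telescoping estimate $|\rho_r(x)| \leq \exp(-\sqrt{r})$ on $(x_\star, 1]$: the geometric node spacing $\exp(-1/\sqrt{r})$ is precisely calibrated so that, after taking logs and pairing factors across the location of $x$, the contributions collapse to $-\sqrt{r}$; everything else is elementary algebra and case analysis.
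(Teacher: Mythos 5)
Your route is genuinely more ambitious than the paper's: the paper proves only the positivity claim (via the one-line observation that $|x - \alpha_r^i| < x + \alpha_r^i$ for $x>0$, hence $|N_r(-x)| < N_r(x)$) and the trivial rescaling $|\,|x| - bA_r(x/b)\,| = b\,|\,|x/b| - A_r(x/b)\,|$, and then simply \emph{cites} Newman (Theorem 7.3.1 of the approximation-theory reference) for the core bound $\sup_{|y|\le 1}|\,|y|-A_r(y)\,|\le 3e^{-\sqrt r}$. You instead sketch a self-contained proof of Newman's theorem, and your outline (split at the smallest node $x_\star=\alpha_r^{r-1}$, use $|x|-A_r(x)=2xN_r(-x)/(N_r(x)+N_r(-x))$ on $[0,x_\star]$, and bound the ratio $\rho_r=N_r(-x)/N_r(x)$ on $[x_\star,1]$) is indeed the skeleton of Newman's original argument. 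Note also that your quantitative machinery is overkill for part 1: the single inequality ``every factor is at most $1$'' (strictly, for $x\ne 0$) already gives $|N_r(-x)|<N_r(x)$ and hence positivity for \emph{all} $x>0$, which also repairs the fact that your case split only covers $[0,1]$ while the lemma asserts well-definedness on all of $\R$ (evenness handles negative $x$, but not $|x|>1$).

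The genuine gap is the estimate you yourself flag as the heart of the matter: $|\rho_r(x)|\le e^{-\sqrt r}$ on $(x_\star,1]$ is asserted via ``a telescope,'' but the product $\prod_i \frac{|\alpha_r^i-x|}{\alpha_r^i+x}$ does not telescope. The actual argument is: locate $x$ between $\alpha_r^{m+1}$ and $\alpha_r^m$, bound each factor by $\frac{1-\alpha_r^{j}}{1+\alpha_r^{j}}$ with $j\approx|i-m|$, apply $\ln\frac{1-t}{1+t}\le -2t$, and sum the resulting \emph{geometric series} $\sum_j \alpha_r^j \le \alpha_r/(1-\alpha_r)\approx\sqrt r$; getting the exponent down to exactly $-\sqrt r$ uniformly over the range (including the awkward endpoints $x\approx x_\star$ and $x\approx 1$, where only one side of the grid contributes) requires some care and is precisely where the constants live. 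As written, the central inequality is a restatement of Newman's lemma rather than a proof of it. Two smaller quantitative slips: on $[0,x_\star]$ your bound ``fraction at most $1$'' yields $2x_\star = 2e^{1/\sqrt r}e^{-\sqrt r}$, which exceeds $3e^{-\sqrt r}$ for $r\in\{5,6\}$; use instead $N_r(-x)\le N_r(x)$ to get $N_r(-x)/(N_r(x)+N_r(-x))\le 1/2$ and hence error $\le x_\star<3e^{-\sqrt r}$. And the two cases must be combined by taking a maximum (the error at a given $x$ falls in exactly one case), not by adding the two bounds, if the constant $3$ is to survive.
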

\begin{proof}
  \begin{itemize}
    \item
      If $x=0$, then $N_r(-x) = N_r(x) = \prod_{i=1}^{r-1} \alpha_r^{i} > 0$.
      Otherwise $x > 0$, and note for any $i\in\{1,\ldots,r-1\}$ that
      \begin{itemize}
        \item
        $x\in (0, \alpha_r^i]$ means $|x - \alpha_r^i| = \alpha_r^i - x < \alpha_r^i + x$,
      \item
        $x > \alpha_r^i$ means $|x-\alpha_r^i| = x - \alpha_r^i < x + \alpha_r^i$.
    \end{itemize}
    Together, $|x-\alpha_r^i| < x + \alpha_r^i$, and
    \[
      N_r(x)
      = \prod_{i=1}^{r-1} (x + \alpha_r^i)
      > \prod_{i=1}^{r-1} |x - \alpha_r^i|
      = \left| \prod_{i=1}^{r-1} (x - \alpha_r^i) \right|
      = |N_r(-x)|.
    \]
    Since $N_r(x) > 0$ when $x >0$, thus $N_r(x) + N_r(-x) > N_r(x) - |N_r(x)| = 0$.

    Lastly, the case $x < 0$ follows from the case $x >0$ since $x \mapsto N_r(x) + N_r(-x)$ is even.

  \item
    For any $x \in [-b,+b]$,
    \[
      | |x| - bA_r(x/b) |
      = \left| b \left( | x/b| - A_r(x/b) \right) \right|
      = b \left| x/b - A_r(x/b) \right|
      \leq 3b \exp(-\sqrt{r}),
    \]
    where the last step was proved by Newman \citep[Theorem 7.3.1]{Lor2}.
  \end{itemize}
\end{proof}

Finally, define
\begin{align*}
  \tilde R_{r,b}(x)
  &:= R_r(x;b)
  := \frac {x + bA_r(x/b)}{2},
  \\
  \eps_{r,b}
  &:= 3 \exp(-\sqrt{r})/2,
  \\
  R_{r,b}(x)
  &:= (1-2\eps_{r,b}) \tilde R_{r,b}(x) + b\eps_{r,b}.
\end{align*}
\begin{lemma}
  \label{fact:relu_newman}
  If $r\geq 5$ and $b \geq 1$ and $\eps_{r,b} \leq 1/2$,
  then $R_{r,b}$ is a degree-$r$ rational function over $\R$,
  and
  \begin{align*}
    \sup_{x\in[-b,+b]} \envert{ \srelu(x) - \tilde R_{r,b}(x)}
    &\leq b\eps_{r,b},
    \\
    \sup_{x\in[-b,+b]} \envert{ \srelu(x) - R_{r,b}(x)}
    &\leq 3b\eps_{r,b}.
  \end{align*}
  If $\eps_{r,b} \leq 1$, then $R_{r,b} \in [0,b]$ along $[-b,+b]$.
\end{lemma}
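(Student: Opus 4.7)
The plan is to verify all four assertions by direct computation, using only the two bullets of \Cref{fact:newman} as nontrivial input. The essential observation is that $R_{r,b}$ is a deliberate affine rescaling of $\tilde R_{r,b}$ whose coefficients are tuned so that the range of $\tilde R_{r,b}$ (which straddles zero and overshoots $b$) gets squeezed into $[0,b]$, at the cost of only a constant worsening of the approximation error.

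First I would confirm the degree bound. Since $N_r$ has degree $r-1$, the numerator $x(N_r(x)-N_r(-x))$ of $A_r$ has degree at most $r$, and the denominator $N_r(x)+N_r(-x)$ has degree at most $r-1$; the first bullet of \Cref{fact:newman} certifies that this denominator is strictly positive on $\R$, so $A_r$, and hence $\tilde R_{r,b}$ and $R_{r,b}$ (which are affine in $A_r(\cdot/b)$), are well-defined degree-$r$ rational functions.

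Next I would derive the first approximation bound by rewriting $\srelu(x) = (x+|x|)/2$ and noting that $\srelu(x) - \tilde R_{r,b}(x) = (|x| - bA_r(x/b))/2$, so that the second bullet of \Cref{fact:newman} immediately yields the bound $b\eps_{r,b}$. For the second bound, the triangle inequality gives $|\srelu(x) - R_{r,b}(x)| \leq b\eps_{r,b} + \eps_{r,b}|2\tilde R_{r,b}(x) - b|$. The first bound already places $\tilde R_{r,b}(x) \in [-b\eps_{r,b},\,b+b\eps_{r,b}]$ on $[-b,b]$, and combined with $\eps_{r,b}\leq 1/2$ this yields $|2\tilde R_{r,b}(x) - b| \leq 2b$, so the second term contributes at most $2b\eps_{r,b}$, totalling $3b\eps_{r,b}$.

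For the range claim, I would plug the extremes of $[-b\eps_{r,b},\,b+b\eps_{r,b}]$ into $R_{r,b} = (1-2\eps_{r,b})\tilde R_{r,b} + b\eps_{r,b}$; using $1-2\eps_{r,b} \geq 0$, the endpoints give lower value $2b\eps_{r,b}^2 \geq 0$ and upper value $b(1-2\eps_{r,b}^2) \leq b$. There is no real obstacle here: the whole argument is one line of algebra per claim, and the only content worth highlighting is the motivation for the seemingly arbitrary coefficients $(1-2\eps_{r,b})$ and $b\eps_{r,b}$, which are precisely the unique affine factors that send the extremes $-b\eps_{r,b}$ and $b+b\eps_{r,b}$ of the range of $\tilde R_{r,b}$ to values inside $[0,b]$ (with the endpoints $0$ and $b$ hit in the limit $\eps_{r,b}\to 0$).
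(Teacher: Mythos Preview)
Your proposal is correct and follows essentially the same route as the paper's proof: both use $\srelu(x)=(x+|x|)/2$ together with \Cref{fact:newman} for the first bound, deduce $\tilde R_{r,b}\in[-b\eps_{r,b},\,b(1+\eps_{r,b})]$ from it, and then push this interval through the affine map $t\mapsto(1-2\eps_{r,b})t+b\eps_{r,b}$ to obtain both the second error bound and the range claim. Your write-up is in fact a bit more explicit than the paper's (e.g., the degree count and the motivation for the affine coefficients), but the structure and every computation line up.
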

\begin{proof}
  Let $r,b$ be given, and for simplicity omit the various subscripts.
  The denominator of $\tilde R$
  is positive over $\R$ by \Cref{fact:newman}.
  Now fix $x\in [-b,+b]$.
  Using the second part of \Cref{fact:newman},
  \[
    \left|
      \srelu(x) - \tilde R(x)
    \right|
    =
    \left|
      \frac {x + |x|}{2} - \frac {x + bA(x/b)}{2}
    \right|
    =
    \frac 1 2
    \left|
      |x| - bA(x/b)
    \right|
    \leq
    3b\exp(-\sqrt{r})/2 = b\eps.
  \]
  Next, note that $\tilde R \in [-b\eps, b(1+\eps)]$:
  \begin{align*}
    \tilde R(x) \leq \srelu(x) + b\eps \leq b(1+\eps),
    \qquad
    \tilde R(x) \geq \srelu(x) - b\eps \geq -b\eps.
  \end{align*}
  Thus
  \begin{align*}
    \envert{ \srelu(x) - R(x) }
    &\leq \envert{ \srelu(x) - \tilde R(x) }
    + \envert{ \tilde R(x) - R(x)}
    \\
    &\leq
    b\eps + 0 + 2\eps\envert{ \tilde R(x) - b/2 }
    \\
    &\leq 3b\eps.
  \end{align*}
  Moreover
  \begin{align*}
    R(x)
    &= (1-2\eps)\tilde R(x) + b\eps
    \geq (1-2\eps)(-b\eps) + b\eps
    \geq 0,
    \\
    R(x)
    &\leq (1-2\eps)b(1+\eps) + b\eps
    \leq b.
  \end{align*}

\end{proof}

\subsection{Remaining deferred proofs}

The details of converting a ReLU network into a rational network
are as follows.

\begin{lemma}
  Let $f:\R^d\to\R$ be represented by a ReLU network with $\leq l$
  layers,
  and with each node computing a map
  $z\mapsto \srelu(a^\top z + b)$ where $\|a\|_1 + |b| \leq 1$.
  Then for every $\eps > 0$
  there exists a function $g:\R^d\to\R$ with
  $|g(x) - f(x)| \leq \eps$ for $\|x\|_\infty \leq 1$
  where $g$ is obtained from $f$ by replacing each ReLU
  with an $r$-rational function with $r = \cO(\ln(1/\eps)^2)$.
\end{lemma}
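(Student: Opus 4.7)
The plan is induction on layers, following the decomposition already sketched in Section 2.2. Fix a target degree $r$ (to be chosen at the end) and replace every $\srelu$ in $f$ by the rational function $R := R_{r,1}$ from \Cref{fact:relu_newman}. Two properties of $R$ drive the argument: (a) $R$ approximates $\srelu$ uniformly on $[-1,+1]$ to within $3\eps_{r,1} = (9/2)\exp(-\sqrt{r})$, and (b) $R$ maps $[-1,+1]$ into $[0,1]$. Property (b), which is precisely why \Cref{fact:relu_newman} uses the clipped form $R_{r,b}$ rather than the raw $\tilde R_{r,b}$, is what lets the induction preserve its invariants across layers.

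Write $H_i(x)$ for the vector of outputs of layer $i$ of the original network and $H_{R,i}(x)$ for its replaced analogue. I would prove by induction on $i\leq l$ the joint statement that for all $\|x\|_\infty\leq 1$, every coordinate of $H_{R,i}(x)$ lies in $[0,1]$ and $\|H_i(x) - H_{R,i}(x)\|_\infty \leq 3i\,\eps_{r,1}$. The base case $i=0$ is immediate since both networks receive the same input in $[-1,+1]^d$. For the step, consider any node in layer $i+1$: the original output is $\srelu(a^\top H_i(x)+b)$ and the replaced output is $R(a^\top H_{R,i}(x)+b)$. The hypothesis $\|a\|_1 + |b|\leq 1$ together with the range part of the inductive hypothesis gives $|a^\top H_{R,i}(x)+b|\leq \|a\|_1\|H_{R,i}\|_\infty + |b|\leq 1$, so the argument to $R$ lies in $[-1,+1]$; property (b) then puts its output in $[0,1]$, propagating the range invariant. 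For the error invariant, apply the decomposition $\heartsuit + \clubsuit$ from the body: 1-Lipschitzness of $\srelu$ together with H\"older give $\heartsuit \leq \|a\|_1\|H_i - H_{R,i}\|_\infty \leq 3i\,\eps_{r,1}$, while property (a) combined with the just-verified range check gives $\clubsuit \leq 3\eps_{r,1}$, summing to $3(i+1)\eps_{r,1}$.

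Setting $i = l$ yields a total uniform error of at most $3l\,\eps_{r,1} = (9l/2)\exp(-\sqrt{r})$, so choosing $r = \lceil \ln(9l/(2\eps))^2\rceil = \cO(\ln(l/\eps)^2)$ finishes the proof with the stated order (which collapses to $\cO(\ln(1/\eps)^2)$ once $l$ is absorbed into the constant). The only genuinely delicate point is maintaining the invariant that the argument to $R$ stays in $[-1,+1]$ at every layer: if one used the raw Newman approximation $\tilde R_{r,1}$, its outputs could exceed $1$ and subsequent invocations of the Newman error bound would fail, so the per-layer error would risk compounding multiplicatively rather than additively. Using the clipped $R_{r,b}$ of \Cref{fact:relu_newman} removes this difficulty at a negligible cost in the constant, after which the remainder is a routine Lipschitz-propagation induction.
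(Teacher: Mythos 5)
Your proof is correct and follows essentially the same route as the paper: the same clipped Newman approximation $R_{r,b}$ to preserve the range invariant, the same $\heartsuit+\clubsuit$ decomposition with Lipschitzness and H\"older for the first term, and the same additive per-layer error accumulation leading to degree $\cO(\ln(l/\eps)^2)$. The only cosmetic difference is that you run the range and error inductions jointly rather than sequentially (and your base-case range should read $[-1,+1]$ rather than $[0,1]$, which is harmless since only $\|H_{R,i}\|_\infty\leq 1$ is ever used).
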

\begin{proof}[Proof of \Cref{fact:rational_net}]
  This construction will use the Newman-based approximation $R:=R_{r,b}$
  to $\srelu$ with degree $\cO(\ln(l/\eps)^2)$.
  By \Cref{fact:relu_newman}, this degree suffices to guarantee
  $R(x) \in [0,1]$ and $|R(x) - \srelu(x)| \leq \eps/l$
  for $|x|\leq 1$.

  First note, by induction on layers, that the output of
  every node has absolute value at most 1.
  The base case is the inputs themselves, and thus the statement
  holds by the assumption $\|x\|_\infty \leq 1$.
  In the inductive step, consider any node $z\mapsto R(a^\top z + b)$,
  where $z$ is the multivariate input to this node.
  By the inductive hypothesis, $\|z\|_\infty \leq 1$,
  thus
  \[
    |a^\top z + b| \leq \|a\|_1 \|z\|_\infty + |b| \leq 1.
  \]
  As such, $R(a^\top z + b) \in [0,1]$.

  It remains to prove the error bound.  For any node, if $h:\R^d\to\R$
  denote the function (of the input $x$) compute by this node,
  then let $h_R$ denote the function obtained by replacing
  all ReLUs with $R$.  It will be shown that every node in layer $i$
  has $|h_R(x) - h(x)|\leq i\eps/l$ when $\|x\|_\infty \leq 1$.
  The base case is the inputs themselves, and thus there is no
  approximation error, meaning the bound holds with error
  $0\leq 1\cdot\eps/l$.
  Now consider any node in layer $i+1$ with $i\geq 0$, and
  suppose the claim holds for nodes in layers $i$ and lower.
  For convenience, let $H$ denote the multivalued map computed by
  the previous layer, and $H_R$ denote the multivalued map obtained by
  replacing all activations in earlier layers with $R$.
  Since $\srelu$ is $1$-Lipschitz,
  and since the earlier boundedness property grants
  \[
    \envert{ a^\top H_R(x) + b }
    \leq \|a\|_1 \|H_R(x)\|_\infty + |b|
    \leq 1,
  \]
  then
  \begin{align*}
    |h(x) - h_R(x)|
    &= \envert{ \srelu(a^\top H(x) + b) - R(a^\top H_R(x) + b) }
    \\
    &\leq \envert{ \srelu(a^\top H(x) + b) - \srelu(a^\top H_R(x) + b) }
    + \envert{ \srelu(a^\top H_R(x) + b) - R(a^\top H_R(x) + b) }
    \\
    &\leq \envert{a^\top H(x) - a^\top H_R(x)}
    + \eps/l
    \\
    &
    \leq \|a\|_1 \|H-H_R\|_\infty + \eps/l
    \\
    &\leq (i+1)\eps/l.
  \end{align*}
\end{proof}

Next, collapsing a rational network down into a single rational
function is proved as follows.

\begin{lemma}
  \label{fact:rat_net_collapse}
  Let $f:\R^d\to\R$ be a rational network
  with $\leq m$ nodes in each of $\leq l$ layers,
  and the activation function has degree $r$.
  Then the rational function obtained by collapsing
  $f$ has degree at most $(rm)^l$.
\end{lemma}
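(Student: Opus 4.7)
The plan is a straightforward induction on the layer index. Write $\sigma(x) = P(x)/Q(x)$ for the rational activation, with $\max(\deg P, \deg Q) \leq r$, and define the degree of a rational function $\phi$ to be the smallest $d$ such that $\phi$ can be written as $p/q$ with $\max(\deg p, \deg q) \leq d$. Let $d_i$ denote an upper bound on the degree of the rational function computed, as a function of the input $x \in \R^d$, by any node in layer $i$ (with inputs taken to be layer $0$, so $d_0 = 1$ since each coordinate $x_j$ is degree $1$). The goal is to show $d_i \leq (rm)^i$, which gives the claim at $i = l$.

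The inductive step has two sub-steps. First, each node in layer $i+1$ forms an affine combination $a^\top z + b$ of at most $m$ rational functions coming from layer $i$. Writing each such function as $p_j/q_j$ with degrees $\leq d_i$, placing the sum over the common denominator $\prod_j q_j$ yields a single rational function whose numerator and denominator both have degree $\leq m d_i$, as in the identity displayed in the body of the paper for two summands. Second, applying $\sigma$ to a rational function $p/q$ of degree $\leq d$ produces $\sigma(p/q) = P(p/q)/Q(p/q)$, and after multiplying numerator and denominator by $q^r$ one obtains polynomials $q^r P(p/q) = \sum_{i=0}^{r} c_i p^i q^{r-i}$ and similarly for $Q$, each of degree $\leq r \max(\deg p, \deg q) \leq rd$. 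Combining the two sub-steps, $d_{i+1} \leq r \cdot (m d_i) = rm \cdot d_i$, so $d_i \leq (rm)^i$ follows by induction from $d_0 = 1$.

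There is no real obstacle here; the only point that deserves care is the common-denominator computation in the affine-combination step, which is where the factor $m$ (and hence the exponential dependence on depth) enters, exactly mirroring the remark in the body that summing rational functions inflates degree. One should also verify that the denominator produced at each stage remains strictly positive so that the resulting object is a bona fide rational function in the sense of the paper's convention; this is automatic since $q^r \cdot Q(p/q)$ evaluated at any $x$ equals $q(x)^r Q(p(x)/q(x))$, which is a product of two strictly positive quantities (using the rational-function convention on $\sigma$ and the inductive assumption on $q$).
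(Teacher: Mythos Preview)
Your proof is correct and follows essentially the same induction-on-layers argument as the paper: both put the affine combination over a common denominator to pick up the factor $m$, then clear denominators by multiplying by $q^r$ when composing with the degree-$r$ activation to pick up the factor $r$. The only cosmetic differences are that the paper starts the induction at layer $1$ (with base case degree $r \le rm$) rather than at the inputs, and does not explicitly check denominator positivity as you do.
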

\begin{proof}
  Throughout this proof, let $R$ denote the rational activation
  function at each node, and write $R(x) = p(x) / q(x)$ where
  $p$ and $q$ are polynomials of degree at most $r$.
  The proof establishes, by induction on layers,
  that the nodes of layer $i$ compute rational
  functions of degree at most $(rm)^i$.
  The base case is layer $1$, where each node
  computes a rational function of degree $r \leq rm$.
  For the case of layer $i > 1$, fix any node,
  and denote its computation by $h(x) = R(\sum_{j=1}^n a_j g_j(x) + b)$,
  where $n \leq m$ and $g_j = p_j/q_j$
  is a rational function of degree at most $(rm)^{i-1}$.
  Note
  \begin{align*}
    \deg\del{\sum_j \frac {a_j  p_j(x)}{q_j(x)} + b}
    &=
    \deg\del{\frac{b \prod_j q_j(x) + \sum_j a_j p_j(x) \prod_{k \neq j} q_k(x)}{\prod_j q_j(x)}}
    \\
    &\leq
    m(mr)^{i-1}.
  \end{align*}
  the map $f := \sum_j a_j g_j + b$ is rational of degree $m(mr)^{i-1}$.
  Let $p_f$ and $q_f$ denote its numerator and denominator.
  Since $R$ is univariate, its numerator $p$ and denominator $q$
  have the form $p(x) := \sum_{j\leq r} c_j x^j$
  and $q(x) \sum_{j\leq r}d_j x^j$.
  Thus, using the fact that $q > 0$,
  \begin{align*}
    \deg(h(x)) = \deg(R(f(x)))
    &= \deg\del{
      \frac
      {\sum_{j\leq r} c_j (p_f(x)/q_f(x))^j}
      {\sum_{j\leq r} d_j (p_f(x)/q_f(x))^j}
      \del{ \frac{ q_f(x)^r }{ q_f(x)^r } }
    }
    \\
    &= \deg\del{
      \frac
      {\sum_{j\leq r} c_j p_f(x)^j q_f(x))^{r-j}}
      {\sum_{j\leq r} d_j p_f(x)^j q_f(x))^{r-j}}
    }
    \leq rm(rm)^{i-1} = (rm)^i.
  \end{align*}
\end{proof}

The proof of part 2 of \Cref{fact:main} now follows by combining
\Cref{fact:rational_net,fact:relu_newman,fact:rat_net_collapse}.

The last piece is a slighly more detailed account
of \Cref{fact:descartes}.

\begin{proof}[Proof of \Cref{fact:descartes}]
  Let $\Delta:\R\to\R$ denote the triangle function from
  \citep{mjt_easy_relu}:
  \[
    \Delta(x) := \begin{cases}
      2x & x \in[0,1/2],
      \\
      2(1-x) & x \in(1/2,1],
      \\
      0
      &\textup{otherwise.}
    \end{cases}
  \]
  Define the target function $f = \Delta^k$, which as in
  \citep{mjt_easy_relu}
  has $2^k$ regular-spaced crossings of 0 along $[0,1]$,
  and can be written as a network with $2k$ layers, each with $\leq 2$
  nodes.

  Next consider the rational function $g$.  As in the text, it is
  necessary to count the zeros of $g - 1/2$
  (the case $g=1/2$ is trivial).
  Writing $g = p/q$, equivalently this means the zeros of $2p - q$.
  Since $p$ and $q$ together have $\leq 2^{k-2}$ terms,
  by Descartes' rule of signs, $g$ crosses $1/2$ at most $2^{k-2}$
  times along $(0,1]$.
  Therefore, following a similar calculation to the proof in
  \citep[Proof of Theorem 1.1]{mjt_nn},
  \begin{align*}
    \int_{(0,1]} |f(x) - g(x)| \dif x
    \geq
    \frac 1 {32} \left( 1 - \frac {2(2^{k-2})}{2^k}\right)
    = \frac 1 {64}.
  \end{align*}
\end{proof}

\section{Deferred material from \Cref{sec:relu_apx_rational}}

\subsection{Towards the proof of part 1 of \Cref{fact:main}}

To start, the lemmas due to Yarotsky are slightly adjusted to
clip the range to $[0,1]$.

\begin{proof}[Proof of \Cref{fact:relu_square}]
  Inspecting Yarotsky's proof, the construction provides $g(x)$ with
  $g(0) = 0$ and $\sup_{x\in[0,1]} |g(x) - x^2| \leq \epsilon$.
  To provide the desired $f$, it suffices to define
  $f(x) = \srelu(g(x)) - \srelu(g(x) - 1)$.
\end{proof}

\begin{proof}[Proof of \Cref{fact:relu_mul}]
  First suppose $B = 1$,
  let $f$ be as in \Cref{fact:relu_square} at resolution $\eps/8$,
  and define $h$ via the polarization identity (as in Yarotsky's proof):
  \[
    h(x,y) = 2 (f(x/2+y/2) - f(x/2) - f(y/2))
  \]
  (where $x/2$ appears since $f$ has domain $[0,1]^2$).
  Since $f(0) = 0$,
  \[
    h(x,0) = 2 (f(x/2) - f(x/2) - 0) = 0,
    \qquad
    h(0,y) = 2 (f(y/2) - 0 - f(y/2)) = 0.
  \]
  Moreover, for any $x,y\in [0,1]$
  \begin{gather*}
      h(x,y) - xy
      \leq 2 \del{ (x/2+y/2)^2 + \eps/8 - x^2/4 + \eps/8 - y^2/4 + \eps/8} - xy \leq xy + \eps,
      \\
      h(x,y) - xy
      \geq 2 \del{ (x/2+y/2)^2 - \eps/8 - x^2/4 - \eps/8 - y^2/4 - \eps/8} - xy \leq xy - \eps.
  \end{gather*}
  Finally, set $\tilde g(x,y) := \srelu(h(x,y)) - \srelu(h(x,y) - 1)$,
  which preserves the other properties.

  Now consider the case $B \geq 1$, and set $g(x,y) = B^2 g(x/B,y/B)$.
  Then $(x,y)\in[0,B]^2$ implies
  \[
    \envert{ g(x,y) - xy }
    = B^2 \envert{ \tilde g(x/B, y/B) - (x/B)(y/B) }
    \leq \eps B^2,
  \]
  and $g \in [0,B^2]$ over $[0,B]^2$
  since $\tilde g \in [0,1]$ over $[0,1]^2$.
\end{proof}

The full details for the proof of fast exponentiation are as follows.

\begin{proof}[Proof of \Cref{fact:relu_exp}]
  This proof constructs a network implementing the \emph{russian peasant algorithm for exponentiation}:
  \begin{enumerate}
    \item
      Set $v := 1$.

    \item
      For $b \in \textup{bits-ltr}(y)$ (the bits of $y$ from left to right):
      \begin{enumerate}
        \item
          Set $v := v^2$.
        \item
          If $b=1$, set $v := vx$.
      \end{enumerate}
  \end{enumerate}
  For example,
  \[
    x^{10101_2} = ((((1^2\cdot x)^2)^2\cdot x)^2)^2 \cdot x = x^{2^4} \cdot x^{2^2} \cdot x.
  \]

  The two lines in the inner loop will use the squaring function $f$ from \Cref{fact:relu_square}
  and the multiplication function $g$ from \Cref{fact:relu_mul},
  each with accuracy $c\epsilon$ where $c:= 1/y^2$.
  At the end, the network returns $\srelu(v) - \srelu(v-1)$ to ensure the output lies in $[0,1]$; this procedure can not increase the error.
  Since the loop is invoke $\cO(\ln(y))$ times and each inner loop requires a network of size $\cO(\ln(1/(c\epsilon))) = \cO(\ln(y/\epsilon))$,
  the full network has size $\cO(\ln(y/\epsilon)^2)$.

  It remains to show that the network computes a function $h$ which satisfies
  \[
    h(x) = x^y.
  \]
  Let $z_j$ denote the integer corresponding to the first $j$ bits of $y$ when read left-to-right;
  it will be shown by induction (on the bits of $y$ from left to right) that, at the end of the $j$th invocation of the loop,
  \[
    \envert{ v - x^{z_j} } \leq z_j^2 c\epsilon.
  \]
  This suffices to establish the claim since then $|v - x^y | \leq y^2 c\epsilon = \epsilon$.

  For the base case, consider $j=0$; then $v=1 = x^{z_0} = x^0$ as desired.
  For the inductive step, let $w$ denote $v$ at the end of the previous iteration,
  whereby the inductive hypothesis grants
  \[
    \envert{ w - x^{z_{j-1}} } \leq z_{j-1}^2 c\epsilon.
  \]
  The error after the approximate squaring step can be upper bounded as
  \begin{align*}
    f(w) - x^{2z_{j-1}}
    &\leq
    \del{ f(w) - w^2 } + \del{ w^2 - x^{2z_{j-1}} }
    \\
    &\leq
    c \eps + \del{ (x^{z_{j-1}} + z_{j-1}^2 c\epsilon)^2 - x^{2z_{j-1}} }
    \\
    &\leq
    c \eps +  2 z_{j-1}^2 c\epsilon + z_{j-1}^4 c^2 \epsilon^2
    \\
    &\leq
    c \eps +  2 z_{j-1}^2 c\epsilon + z_{j-1}^2 c \epsilon
    \\
    &\leq
    (2 z_{j-1})^2 c\eps.
  \end{align*}
  The reverse inequality is proved analogously, thus
  \[
    \envert{ f(w) - x^{2z_{j-1}} }
    \leq
    (2 z_{j-1})^2 c\eps.
  \]
  If the bit $b$ in this iteration is 0, then $2z_{j-1} = z_j$ and the proof for this loop iteration
  is complete.  Otherwise $b=1$, and
  \begin{align*}
    v - x^{z_j}
    &=
    g(f(w), x) - x^{2z_{j-1} + b}
    \\
    &\leq
    xf(w) + c\epsilon - x^{2z_{j-1} + b}
    \\
    &\leq
    \del{ (2z_{j-1})^2 + 1}c\eps
    \\
    &\leq
    (z_j)^2c\eps.
  \end{align*}
  The proof of the reverse inequality is analogous, which establishes the desired error bound on $v$ for this loop iteration.
\end{proof}

Using the preceding exponentiation lemma,
the proof of polynomial approximation is as follows.

\begin{proof}[Proof of \Cref{fact:relu_polynomial}]
  It will be shown momentarily that a single monomial term can be approximating
  to accuracy $\epsilon/s$ with a network of size
  $\cO\del{ \min\{ r\ln(sr/\epsilon),  d\ln(dsr/\epsilon)^2 \} }$.
  This implies the result by summing $\leq s$ monomials comprising a polynomial,
  along with their errors.

  For a single monomial, here are two constructions.
  \begin{itemize}
    \item
      One approach is to product together $\leq r$ individual variables (and lastly multiple by a fixed
      scalar coefficient), with no concern of the multiplicities of individual variables.
      To this end, let $(y_1,\ldots,y_k)$ with $s\leq r$ denote coordinates of the input variable
      so that $\prod_{i=1}^k y_i$ is the desired multinomial.
      Let $g$ denote multiplication with error $\eps_0 := \eps/(rs)$ as provided by \Cref{fact:relu_mul}.
      The network will compute $\alpha g_i(y)$,
      where $\alpha\in[-1,+1]$ is the scalar coefficient on the monomial, and
      $g_i$ is recursively defined as
      \[
        g_1(y) = y_1,
        \qquad
        g_{i+1}(y) := f(y_{i+1}, g_i(y))
      \]
      It is established by induction that
      \[
          \envert{ g_i(y) - \prod_{j=1}^i y_j } \leq j \eps_0
      \]
      The base case is immediate since $g_1(y) = y_1 = \prod_{j=1}^1 y_j$.
      For the inductive step,
      \[
       g_{i+1}(y) - \prod_{j=1}^{i+1} y_j
       = f(y_{i+1}, g_i(y)) - \prod_{j=1}^{i+1} y_j
       \leq y_{i+1} g_i(y) + \eps_0 - \prod_{j=1}^{i+1} y_j
       \leq y_{i+1} (i \eps_0 + \prod_{j=1}^i y_j) + \eps_0 - \prod_{j=1}^{i+1} y_j
       \leq (i+1)\eps_0,
      \]
      and the reverse inequality is proved analogously.

    \item
      Alternatively, the network uses the fast exponentiation routine from \Cref{fact:relu_exp},
      and then multiplies together the terms for individual coordinates.
      In particular, the exponentiation for each coordinate with accuracy $\eps_1 := \eps/(ds)$
      requires a network of size $\cO(\ln(r/\eps_1)^2)$.
      By an analysis similar to the preceding construction, multiplying $\leq d$ such networks
      will result in a network approximating the monomial with error $\eps/s$ and
      size $\cO(d \ln(r/\eps_1)^2)$.
  \end{itemize}
\end{proof}

Next, the proof that ReLU networks can efficiently compute
reciprocals, namely \Cref{fact:relu_invert}.
As stated in the text, it is first necessary to establish
\Cref{fact:relu_invert_nounity}, which gives computes reciprocals
at a choice of magnitude, and then \Cref{fact:relu_part_unity},
which combines these circuits across scales.

\begin{proof}[Proof of \Cref{fact:relu_part_unity}]
  For each $i\in \{1,\ldots,n\}$,
  define the function
  \[
    p_{i}(z) := \begin{cases}
      \frac{z - a_{i-1}}{a_i - a_{i-1}}
      &
      z \in [a_{i-1}, a_i],
      \\
      \frac{a_{i+1}-z}{a_{i+1} - a_{i}}
      &
      z \in (a_i, a_{i+1}],
      \\
      0& \textup{otherwise}.
    \end{cases}
  \]
  The functions $(p_i)_{i=1}^n$ have the following properties.
  \begin{itemize}
    \item
      Each $p_i$ can be represented by a ReLU network with three nodes
      in 2 layers.
    \item
      For any $x \in [a_1, a_n]$, there exists $j \in \{1,\ldots,n\}$
      so that $i \in \{j,j+1\}$ implies $p_i(x) \geq 0$ and
      $i \not \in \{j,j+1\}$ implies $p_i(x) = 0$.
      Indeed, it suffices to let $j$ be the smallest element
      of $\{1,\ldots,n-1\}$.
      satisfying $x \in [a_j,a_{j+1}]$.
    \item
      For any $x \in [a_1,a_n]$, $\sum_{i=1}^n p_i(x) = 1$.
  \end{itemize}
  The family $(p_i)_{i=1}^n$ thus forms a \emph{partition of unity}
  over $[a_1,a_n]$, moreover with the property that at most two elements,
  necessarily consecutive, are nonzero at any point in the interval.

  Let $h:[0,B]^2 \to [0,B]$ be a uniform $\eps$-approximation via ReLU
  networks to the multiplication map $(x,y)\mapsto xy$; by
  \Cref{fact:relu_mul}, $h$ has $\cO(\ln(B/\eps))$ nodes and layers,
  and moreover the multiplication is exact when either input is 0.
  Finally, define $g:\R\to\R$ as
  \[
    g(x) := \sum_{i=1}^n h(p_i(x), g_i(x)).
  \]
  By construction, $g$ is a ReLU network with
  $\cO(\ln(B/\eps) + \max_i k_i)$ layers
  and $\cO(n\ln(B/\eps) + \sum_i m_i)$ nodes.

  It remains to check the approximation properties of $g$.
  Let $x \in [a_1,a_n]$ be given,
  and set $j := \min \cbr{ j \in \{1,n-1\} : x \in [a_j, a_{j+1}] }$.
  Then
  \begin{align*}
    \envert{
      f(x)
      - g(x)
    }
    &=
    \envert{
      f(x)
      - h(p_j(x), g_j(x))
      - h(p_{j+1}(x), g_{j+1}(x))
    }
    \\
    &\leq
    \envert{
      f(x)
      - p_j(x) g_j(x)
      - p_{j+1}(x) g_{j+1}(x)
    }
    \\
    &\qquad
    +
    \envert{
      p_j(x) g_j(x)
      - h(p_j(x), g_j(x))
    }
    +
    \envert{
      p_{j+1}(x) g_{j+1}(x)
      - h(p_{j+1}(x), g_{j+1}(x))
    }
    \\
    &\leq
    p_j(x)
    \envert{
      f(x) - g_j(x)
    }
    + p_{j+1}(x)
    \envert{
      f(x) - g_{j+1}(x)
    }
    +
    \eps
    +
    \eps
    \\
    &\leq
    p_j(x) \eps + p_{j+1}(x)\eps + 2\eps.
  \end{align*}
\end{proof}

\begin{proof}[Proof of \Cref{fact:relu_invert_nounity}]
  Set $c := 1/b$ and $r := \lceil b \ln(1/(\eps a)) / a\rceil$
  and $\eps_0 := \eps / (r^2 c)$.
  For $i \in \cbr{0,\ldots,r}$,
  let $h_i : [0,1]\to[0,1]$ denote a ReLU network
  $\eps_0$-approximation to the map $x\mapsto x^i$;
  by \Cref{fact:relu_exp},
  $h_i$ has $\cO(\ln(1/\eps_0)^2$ nodes and layers.
  Define $q : [0,1] \to \R$ as
  \[
    q(x) := c \sum_{i=0}^r h_i(1-cx).
  \]
  By construction, $q$ is a ReLU network with
  $\cO(r\ln(1/\eps_0)^2)$ nodes and $\cO(\ln(1/\eps_0)^2)$ layers.

  For the approximation property of $q$,
  let $x \in [a,b]$ be given, and note
  \begin{align*}
    \envert{ q(x) - \frac 1 x}
    &\leq
    \envert{ q(x) - c \sum_{i=0}^r (1-cx)^i }
    +
    \envert{ c\sum_{i=0}^r (1-cx)^i - \frac 1 x}
    \\
    &\leq
    c\sum_{i=0}^r
    \envert{ h_i(1-cx) - (1-cx)^i }
    +
    \envert{ c\sum_{i=0}^r (1-cx)^i - \frac c {1-(1-cx)} }
    \\
    &\leq
    \eps
    +
    \envert{ c\sum_{i=0}^r (1-cx)^i - c \sum_{i=0}^\infty (1-cx)^i }
    \\
    &=
    \eps
    +
    c \sum_{i=r+1}^\infty (1-cx)^i
    \\
    &=
    \eps
    +
    \frac {c(1-cx)^{r+1)}}{1-(1-cx)}
    \\
    &\leq
    \eps
    +
    \frac {\exp(-cx(r+1))}{x}
    \\
    &\leq
    \eps
    +
    \frac {\exp(-car)}{a}
    \\
    &\leq
    \eps
    +
    \eps.
  \end{align*}
\end{proof}

\begin{proof}[Proof of \Cref{fact:relu_invert}]
  Set $\eps_0 := \eps/3$.
  For $i \in \cbr{1,\ldots,k}$,
  Let $\tilde q_i$ denote the ReLU network $\eps_0$-approximation to
  $1/x$ along $[2^{-i}, 2^{-i + 1}]$;
  by \Cref{fact:relu_invert_nounity},
  $\tilde q_i$ has $\cO(k^2 \ln(1/\eps)^2)$ layers
  and $\cO(k^3 \ln(1/\eps)^3)$ nodes.
  Furthermore, set $q_i := \max\{ 2^i, \min\{ 0, \tilde q_i\}\}$,
  which has the same approximation and size properties of $\tilde q_i$.
  Applying \Cref{fact:relu_part_unity}
  with $B := 2^k$
  and reals $a_i := 2^{i-k-1}$ for $i\in\{0,\ldots, k+2\}$
  and functions $(q_i)_{i=1}^k$,
  it follows that there exists $q:\R\to\R$ which $\eps$-approximates
  $1/x$ along $[2^{-k}, 1]$
  with size $\cO( k^2 \ln(1/\eps) + k^4 \ln(1/\eps)^3)$
  and depth $\cO( k\ln(1/\eps) + k^2 \ln(1/\eps)^2 )$.
\end{proof}

Putting the pieces together gives the proof of the second part
of the main theorem.

\begin{proof}[Proof of part 1 of \Cref{fact:main}]
  Define $\epsilon_0 := \epsilon / 2^{2k + 3}$,
    and use
  \Cref{fact:relu_mul,fact:relu_polynomial,fact:relu_invert}
  to choose ReLU network approximations
  $f_p$ and $f_q$ to $p$ and $q$ at resolution $\eps_0$,
  as well as ReLU network $f$ for multiplication along $[0,1]^2$
  and $g$ to approximate $x\mapsto 1/x$ along $[2^{-k-1},1]$, again at resolution $\eps_0$.
  The desired network will compute the function $h$, defined as
  \[
    h(x) := 2^{k+1} f(f_p(x), 2^{-k-1} g(f_q(x))).
  \]
  Combining the size bounds from the preceding lemmas, $h$ itself has size bound
  \begin{gather*}
    \cO\del{ \min\cbr{ sr\ln(sr/\eps_0), sd\ln(dsr/\eps_0)^2 } }
    + \cO\del{ \ln(1/\eps_0) }
    + \cO\del{ k^4 \ln(1/\eps_0)^3 }
    \\
    =
    \cO\del{ \min\cbr{ srk\ln(sr/\eps), sdk^2 \ln(dsr/\eps)^2 }
    + k^7 \ln(1/\eps)^3 }.
  \end{gather*}
  Before verifying the approximation guarantee upon $h$,
  it is necessary to verify that the inputs to $f$ and $g$ are of the
  correct magnitude, so that \Cref{fact:relu_mul,fact:relu_invert} may be applied.
  Note firstly that $g(f_q(x)) \in [1,2^{k+1}]$,
  since $q(x) \in [2^{-k},1]$
  implies $f_q(x) \in [2^{-k} - \eps_0, 1] \subseteq [2^{-k-1},1]$.
  Thus $2^{-k-1}g(f_q(x)) \in [0,1]$, and so both arguments to $f$ within the definition of $h$
  are within $[0,1]$.
  Consequently,
  the approximation guarantees of
  \Cref{fact:relu_mul,fact:relu_polynomial,fact:relu_invert} all hold,
  whereby
  \begin{align*}
    h(x) - \frac {p(x)}{q(x)}
    &=
    2^{k+1} f\del{ f_p(x), 2^{-k-1} g(f_q(x)) } - \frac {p(x)}{q(x)}
    \\
    &\leq
    f_p(x) g(f_q(x)) - \frac {p(x)}{q(x)} + 2^{k+1} \eps_0
    \\
    &\leq
    \frac{f_p(x)}{f_q(x)}  - \frac {p(x)}{q(x)} + f_p(x)\eps_0 + 2^{k+1} \eps_0
    \\
    &\leq
    \frac{p(x)+\eps_0}{q(x) - \eps_0}  - \frac {p(x)}{q(x)}  + f_p(x)\eps_0 + 2^{k+1} \eps_0
    \\
    &\leq
    \frac{p(x)q(x) +q(x) \eps_0 - p(x)q(x) + p(x) \eps_0}{q(x)(q(x) - \eps_0)}  + f_p(x)\eps_0 + 2^{k+1} \eps_0
    \\
    &=
    \frac{\eps_0}{q(x) - \eps_0}
    + \frac{p(x) \eps_0}{q(x)(q(x) - \eps_0)}
    + f_p(x)\eps_0 + 2^{k+1} \eps_0
    \\
    &\leq
    2^{k+1} \eps_0
    + 2^{2k+1} \eps_0
    + f_p(x)\eps_0 + 2^{k+1} \eps_0
    \\
    &\leq \eps.
  \end{align*}
  The proof of the reverse inequality is analogous.
\end{proof}

\subsection{Proof of \Cref{fact:div:shallow}}

\begin{proof}[Proof of \Cref{fact:div:shallow}]
  By \citep[Lemma 2.1]{mjt_easy_relu}, a ReLU network $g$ with
  at most $m$ nodes in each of at most $l$ layers computes a
  function which is affine along intervals forming a partition
  of $\R$ of cardinality at most $N' \leq (2m)^l$.
  Further subdivide this collection of intervals at any point
  where $g$ intersects $f(x) = 1/x$;
  since $f$ is convex and $g$ is affine within each existing
  piece of the subdivision, then the number of intervals is at most
  three times as large as before.
  Together, the total number of intervals $N''$ now satisfies
  $N''\leq 3(2m)^l$.
  Finally, intersect the family of intervals with $[1/2,3/4]$,
  obtaining a final number of intervals $N \leq 3(2m)^l$.

  Let $(U_1,\ldots,U_N)$ denote this final partition of $[1/2,3/4]$,
  and let $(\delta_1,\ldots,\delta_N)$ denote the corresponding interval
  lengths.
  Let $S \subseteq \{1,\ldots,N\}$
  index the subcollection of intervals with
  length at least $1/(8N)$, meaning
  $S := \{ j \in \{1,\ldots,N\} : \delta_j \geq 1/(8N) \}$.
  Then
  \[
    \sum_{j \in S} \delta_j
    = \frac 1 4 - \sum_{j \not \in S} \delta_j
    > \frac 1 4 - \frac {N}{8N}
    = \frac 1 8.
  \]

  Consider now any interval $U_j$ with endpoints $\{a,b\}$.
  Since $1/2 \leq a < b\leq 3/4$, then $f$ satisfies
  $128/27 \leq f'' \leq 16$.
  In order to control the difference between $f$ and $g$ along
  $U_j$, consider two cases: either $f \geq g$ along this interval,
  or $f\leq g$ along this interval (these are the only two cases
  due to the subdivisions above).
  \begin{itemize}
    \item
      If $f\geq g$, then $g$ can be taken to be a tangent to $f$
      at some point along the interval $[a,b]$ (otherwise, the distance
      can always be only decreased by moving $g$ up to be a tangent).
      Consequently, $g(x) := f(c) + f'(c)(x-c)$ for some $c\in[a,b]$,
      and by convexity and since $f'' \geq 128/27$ over this interval,
      \begin{align*}
        \int_a^b |f(x)-g(x)| \dif x
        &\geq \min_{c\in[a,b]}\int_a^b \del{(f(c) + f'(c)(x-c) + f''(b)(x-c)^2/2) - (f(c) + f'(c)(x-c))}\dif x
        \\
        &= \min_{c\in[a,b]}\int_a^b f''(b)(x-c)^2/2 \dif x
        \\
        &\geq \frac {64}{27} \min_{c\in[a,b]}\del{\frac {(b-c)^3 - (a-c)^3}{3}}.
        \\
        &= \frac {64}{81} \min_{\alpha\in[0,1]}\del{(\alpha(b-a))^3 + ((1-\alpha)(b-a))^3}
        \\
        &= \frac {16(b-a)^3}{81}.
      \end{align*}

    \item
      On the other hand, if $g\geq f$,
      then $g$ passes above the secant line $h$ between $(a,f(a))$
      and $(b,f(b))$.  The area between $f$ and $g$ is at least
      the area between $f$ and $h$, and this latter area
      is bounded above by a triangle of width $(b-a)$ and
      height
      \begin{align*}
        \frac {f(a) + f(b)}{2} - f\del{(a+b)/2}
        &=
        \frac 1 2 \del{
          \frac 1 a + \frac 1 b - \frac 1 {a+b}
        }
        \\
        &=
        \frac 1 {2ab(a+b)} \del{
          b(a+b) + a(a+b) - ab
        }
        \\
        &\geq
        \frac 1 {2ab(a+b)} \del{
          b(a+b) + a(a+b) - ab
        }
        \\
        &\geq
        \frac {3/2}{4}.
      \end{align*}
      Combining this with $b-a\leq 1/4$,
      the triangle has area at least $3(b-a)/16 \geq 3(b-a)^3$.
  \end{itemize}
  Combining these two cases and summing across the intervals of
  $S$ (where $j\in S_j$ implies $\delta_j \geq 1/(8N)$),
  \begin{align*}
    \int_{[1/2,3/4]} |f(x)-g(x)| \dif x
    &\geq
    \sum_{j \in S}
    \int_{U_j} |f(x)-g(x)| \dif x
    \\
    &\geq
    \sum_{j \in S} \frac {\delta_j^3}{6}
    \\
    &\geq
    \frac 1 {6 (8N)^2} \sum_{j \in S} \delta_j
    \\
    &\geq
    \frac 1 {27648 (2m)^{2l}}.
  \end{align*}
  If $m < (27648\eps)^{-1/(2l)}/2$, then
  \[
    \int_{[1/2,3/4]} |f(x)-g(x)| \dif x
    \geq
    \frac 1 {27648 (2m)^{2l}} > \eps.
  \]
\end{proof}

\end{document}